\theoremstyle{plain}
\newtheorem{theorem}{Theorem}
\newtheorem{definition}{Definition}
\theoremstyle{remark}
\tikzset{elegant/.style={smooth, black, thick, samples=101}}
\tikzstyle{round}=[circle ,text centered,draw=black]
\tikzstyle{arrow} = [-,>=stealth, thick]
\tikzstyle{rct}=[rectangle,draw,thin,fill=white]
\newcommand{\dan}[1]{{\footnotesize[\color{red}DS: #1]}}
\newcommand{\javier}[1]{{\footnotesize[\color{teal}jb: #1]}}
\newcommand{\jinlin}[1]{{\footnotesize[\color{violet}jl: #1]}}
\renewcommand{\dan}[1]{}
\renewcommand{\javier}[1]{}
\renewcommand{\jinlin}[1]{}
\newcommand{\edited}[1]{{\color{black}#1}}
\newcommand{\x}{\mathbf{x}}
\newcommand{\pa}{{\normalfont\text{pa}}}
\newcommand{\pred}{\text{pred}}
\newcommand{\HalfCauchy}{\text{HalfCauchy}}
\newcommand{\AFFINE}{\text{AFFINE}}
\newcommand{\AFFINEALL}{\text{AFFINE\_ALL}}
\newcommand{\AFFINECOEFF}{\text{AFFINE\_COEFF}}
\newcommand{\LINEAR}{\text{LINEAR}}
\newcommand{\DEPENDENT}{\text{DEPENDENT}}
\newcommand{\CONJUGATE}{\text{CONJUGATE}}
\newcommand{\REVERSE}{\text{REVERSE}}
\newcommand{\MARGINALIZE}{\text{MARGINALIZE}}
\newcommand{\RECOVER}{\text{RECOVER}}
\newcommand{\X}{\mathcal{X}}
\newcommand{\given}{\,{|}\,}
\newcommand{\newreptheorem}[2]{\newtheorem*{rep@#1}{\rep@title}\newenvironment{rep#1}[1]{\def\rep@title{#2 \ref*{##1}}\begin{rep@#1}}{\end{rep@#1}}}
\icmltitlerunning{Automatically marginalized MCMC}
\begin{document}

\twocolumn[
\icmltitle{Automatically Marginalized MCMC in Probabilistic Programming}




\begin{icmlauthorlist}
\icmlauthor{Jinlin Lai}{sch}
\icmlauthor{Javier Burroni}{sch}
\icmlauthor{Hui Guan}{sch}
\icmlauthor{Daniel Sheldon}{sch}
\end{icmlauthorlist}

\icmlaffiliation{sch}{University of Massachusetts Amherst}

\icmlcorrespondingauthor{Jinlin Lai}{jinlinlai@cs.umass.edu}

\icmlkeywords{Machine Learning, ICML}

\vskip 0.3in
]



\printAffiliationsAndNotice{} 

\begin{abstract}
	Hamiltonian Monte Carlo (HMC) is a powerful algorithm to sample latent variables from Bayesian
	models. 
	The advent of probabilistic programming	languages (PPLs) frees users from writing inference algorithms and lets users focus on modeling. 
	However, many models are difficult for HMC to solve directly, and often require tricks like model reparameterization. 
	We are motivated by the fact that many of those models could be simplified by marginalization. 
	We propose to use automatic marginalization as part of the sampling process using HMC in a graphical model extracted from a PPL, which substantially improves sampling from real-world hierarchical models.
\end{abstract}
\javier{I don't like to write graph-based PPL when referring to pyro (in the abstract).}
\dan{Possible alternative: ``graphical-model extracted from a PPL'' }
\setcounter{footnote}{1} 
\section{Introduction}

Probabilistic programming languages (PPLs) promise to automate Bayesian reasoning.
A user specifies a probabilistic model and provides data, and the PPL automatically performs inference to approximate the posterior distribution.
The user derives scientific insights without highly specialized expertise in probabilistic inference~\cite{van2018introduction}.
Through tools like BUGS~\cite{lunn2009bugs}, JAGS~\cite{hornik2003jags}, and Stan~\cite{carpenter2017stan}, this paradigm has had tremendous impact in the applied sciences, and there has been considerable research in computer science to advance the foundations of PPLs \cite{GoodmanMRBT08,wood-aistats-2014, dippl,Cusumano-Towner:2019:GGP:3314221.3314642, InferNET18}. 

PPLs vary in many dimensions, including the distributions they can represent and their primary inference approach. We focus on a setting that has had large impact in practice, where
the program corresponds to a graphical model and is compiled to a differentiable log-density function for inference by a variant of Hamiltonian Monte Carlo (HMC)~\cite{duane1987hybrid,neal2012bayesian}.

Despite their promise, the barrier between users and inference in PPLs is often blurred.
There may be different ways to write a model, with inference performance depending critically on the specific choice, such that users again need specialized knowledge.
One issue is the main focus of this paper: it is often possible to analytically marginalize some variables from the model so the inference method operates on a smaller model, which can lead to substantial performance gains.
For example, this idea is used in collapsed Gibbs sampling~\cite{liu1994collapsed}. 
However, such reformulations are almost always done manually and place a significant burden on the user (see examples in Figure~\ref{fig:eight_schools_procedure} and Section~\ref{sec:motivating_examples}).

We develop a method to automatically marginalize variables in a user-specified probabilistic program for inference with HMC. 
Our method works by first compiling a probabilistic program into a graphical model.
Although most HMC-based PPLs compile directly to a log-density, we use the program-tracing features of JAX~\cite{jax2018github} to extract a graphical-model representation of programs written in NumPyro.
In the graphical model, we identify \edited{local} conjugacy relationships that allow \edited{edges to be reversed}. 
\edited{Then we manipulate the graphical model to create unobserved leaf variables, such that they can be marginalized.}
HMC is run on the reduced model, and the marginalized variables are recovered by direct sampling conditional on the variables sampled by HMC.
\edited{For models that have no conjugacy relationships, our method reduces to vanilla HMC.}
Importantly, the interface between the user and the PPL does not change.

Experiments show that our methods can substantially improve the effectiveness of samples from hierarchical partial pooling models and hierarchical linear regression models and significantly outperforms model reparameterization~\cite{betancourt2015hamiltonian} in those models where both apply.
Our implementation is limited to scalar and elementwise array operations and may require user input to avoid excessive JAX compilation times, though these limitations are not fundamental.
\edited{Our code is available at \href{https://github.com/lll6924/automatically-marginalized-MCMC}{https://github.com/lll6924/automatically-marginalized-MCMC}.}

\subsection{Scope and relation to prior work}


As stated above, we focus on the restricted class of probabilistic programs for which we can extract a graphical model representation.
In particular, we focus on (1) \emph{generative} PPLs,\footnote{See \citet{baudart2021compiling} for a definition.} where a model is expressed by writing a sampling procedure, and (2) programs that correspond to a directed graphical model, which means that random variables are generated according to a fixed sequence of conditional distributions in each program execution \edited{(without any stochastic branches or unbounded loops)}.
This includes most applied statistical models written in generative PPLs such as Pyro~\cite{bingham2018pyro}, NumPyro~\cite{phan2019composable}, PyMC~\cite{patil2010pymc}, Edward~\cite{tran2017deep} and TensorFlow Probability~\cite{piponi2020joint}.
It does not directly include Stan programs, which do not always specify a sampling procedure, though most can be converted to do so~\cite{baudart2021compiling}.

Our work builds on prior research on automatic marginalization~\cite{hoffman2018autoconj} and shares technical underpinnings with work to automatically Rao-Blackwellize particle filters for evaluation-based PPLs~\cite{MurrayLKBS18,AtkinsonSEMI22}.
\edited{
  The key distinction of our work is that we leverage the graphical model structure to reformulate a model in a fully automatic way for inference with HMC. 
  Autoconj~\cite{hoffman2018autoconj} achieves the same mathematical goal of integrating out latent variables by recognizing patterns of conjugacy in a log-density function.
  It provides primitives for marginalizing individual variables and computing complete conditional distributions, but not an overall routine to reformulate a model; users must select how to apply the primitives to transform a model or perform inference. 
  With our pipeline, users only need to provide the model, and an end-to-end algorithm (Algorithm~\ref{algorithm1}) utilizes the graph structure to marginalize variables in a fully automatic way. 
  \citet{MurrayLKBS18,AtkinsonSEMI22} perform local model transformations that are similar to ours within a sampling procedure for particle-filter based inference, which applies to very general probabilistic programs.
  In contrast, we focus on programs that correspond to graphical models, which allows us to perform marginalization prior to inference with a fully symbolic representation, and gives a generally useful model reformulation that can be used with many inference algorithms.
  
  A longer discussion of related work appears in Section~\ref{sec:related}.  
}

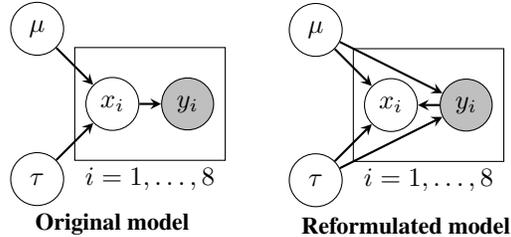
\begin{figure}[t]
\centering
  \begin{minipage}{.45\linewidth}
  \begin{center}
		\begin{tikzpicture}
			\node[round,xshift=0cm, yshift=0cm, label=center:$\mu$](mu){\phantom{$x_i$}};
			\node[round,xshift=0cm, yshift=-2cm, label=center:$\tau$](tau){\phantom{$x_i$}};
			\node[round,xshift=1cm, yshift=-1cm](xi){$x_i$};
			\node[round,xshift=2cm, yshift=-1cm,fill=lightgray](yi){$y_i$};
			\draw[draw=black] (0.5,-0.25) rectangle ++(2,-1.5);
			\node at (1.5,-2) {$i=1,\dots,8$};
			\draw [arrow, ->] (mu) -- (xi);
			\draw [arrow, ->] (tau) -- (xi);
			\draw [arrow, ->] (xi) -- (yi);
			\node at (1,-2.6) {\small \bf Original model};
		\end{tikzpicture}
		\end{center}
  \end{minipage}
  \begin{minipage}{0.45\linewidth}
  \begin{center}
		\begin{tikzpicture}
			\node[round,xshift=0cm, yshift=0cm, label=center:$\mu$](mu){\phantom{$x_i$}};
			\node[round,xshift=0cm, yshift=-2cm, label=center:$\tau$](tau){\phantom{$x_i$}};
			\node[round,xshift=1cm, yshift=-1cm](xi){$x_i$};
			\node[round,xshift=2cm, yshift=-1cm,fill=lightgray](yi){$y_i$};
			\draw[draw=black] (0.5,-0.25) rectangle ++(2,-1.5);
			\node at (1.5,-2) {$i=1,\dots,8$};
			\draw [arrow, ->] (mu) -- (yi);
			\draw [arrow, ->] (tau) -- (yi);
			\draw [arrow, ->] (yi) -- (xi);
			\draw [arrow, ->] (mu) -- (xi);
			\draw [arrow, ->] (tau) -- (xi);
			\node at (1.2,-2.6) {\small \bf Reformulated model};
		\end{tikzpicture}
		\end{center}
  \end{minipage}
  \caption{Graphical models of original and reformulated eight schools models.
    We use a plate to represent shared substructure of different branches. Gray variables are observed.}
\label{fig:eight_schools_procedure}
\end{figure}

\section{Motivating examples}

\label{sec:motivating_examples}

We first present example models where marginalization can significantly benefit HMC-based inference. 


\textbf{The eight schools model} \cite{gelman1995bayesian} is an important demonstration model for PPLs~\cite{gorinova2022thesis} and reparameterization \cite{papaspiliopoulos2007general}.
It is a hierarchical model to study the effect of coaching on SAT performance in eight schools.
An example probabilistic program for eight schools with NumPyro-like syntax is:
\begin{lstlisting}[language=Python,morekeywords={with},basicstyle=\linespread{0.9}\footnotesize\ttfamily]
def eight_schools(sigma, y):
    mu = sample(normal(0, 5))
    tau = sample(half_cauchy(5))
    with plate(8):
    	x = sample(normal(mu, tau))
    	observe(normal(x, sigma), y)
\end{lstlisting}
Mathematically, the model is
\begin{align}
	\mu&\sim\mathcal{N}(0,5^2),\ \ \tau\sim \HalfCauchy(5),\notag\\
	x_i&\sim\mathcal{N}(\mu,\tau^2),\ y_i\sim\mathcal{N}(x_i,\sigma_i^2),\notag
\end{align}
where $i\in\{1,\dots,8\}$ and $(\sigma_{1:8},y_{1:8})$ are given as data. We want to reason about all latent variables, $\mu$, $\tau$ and $x_{1:8}$.
A PPL will compile the model code to a log joint density $\log p(\mu,\tau,x_{1:8},y_{1:8})$ and then run HMC over the latent variables $\mu$, $\tau$ and $x_{1:8}$.\footnote{In practice, latent variables are transformed to have real support~\cite{KucukelbirTRGB17}.}
However, there is another model with the same joint density:
\begin{align}
	\mu&\sim\mathcal{N}(0,5^2),\quad\quad\quad\tau\sim \HalfCauchy(5),\notag\\
	y_i&\sim\mathcal{N}(\mu,\tau^2+\sigma_i^2),\ x_i\sim\mathcal{N}\left(\frac{y_i\tau^2+\mu\sigma_i^2}{\tau^2+\sigma_i^2},\frac{\tau^2\sigma_i^2}{\tau^2+\sigma_i^2}\right).\notag
\end{align}
Both models are shown as graphical models in Figure~\ref{fig:eight_schools_procedure}:  they have different causal interpretations but identical joint distributions and are therefore the same for performing inference.
Importantly, in the reformulated model, we no longer need to run HMC over all latent variables.
Since only $y_{1:8}$ are observed, it is possible to marginalize $x_{1:8}$ to obtain the reduced model $p(\mu,\tau,y_{1:8})=p(\mu)p(\tau)\prod_{i=1}^8p(y_i\given \mu,\tau)$.
We can sample $\mu$ and $\tau$ by running HMC on the reduced model then sample $x_{1:8}$ directly from $p(x_{1:8}\given \mu,\tau,y_{1:8})$ given $\mu$ and $\tau$.
With this strategy, HMC samples 2 variables instead of 10, which significantly speeds up inference.

The principle that allows us to transform the model is conjugacy~(see Section 3.3.2 of \citet{murphy2012machine}).
In a Bayesian model $p(x, y) = p(x)p(y \given x)$ the prior $p(x)$ is conjugate to the likelihood $p(y \given x)$ if the posterior $p(x \given y)$ is in the same parametric family as $p(x)$ for all $y$.
For our working definition, we assume the parametric families of the prior and likelihood have a tractable density function and sampling procedure and that there is an analytical formula for the parameters of the posterior in terms of $y$.
Given these assumptions, it is also possible to sample from the marginal $p(y)$ and compute its density efficiently.\footnote{To sample, draw $x \sim p(x), y \sim p(y \given x)$ and ignore $x$; for the density, use $p(y) = \frac{p(x_0)p(y \given x_0)}{p(x_0 \given y)}$ for a reference value $x_0$.}
Conjugacy is formally a property of distribution families, but we will also say ``$x$ is conjugate to $y$'' when the meaning is clear from context.

In the eight schools model, $x_i$ is conjugate to $y_i$ given $\mu$ and $\tau$, 
which leads to analytical expressions for the distributions $p(x_i\given\mu,\tau,y_i)$ and $p(y_i\given\mu,\tau)$ in the reformulated model and ensures they have tractable densities and samplers.

\textbf{Hierarchical linear regression.}
\label{sec:small_electric_company}
\begin{figure*}
\centering
  \begin{minipage}{.20\linewidth}
  \centering
  {\small \bf Original model}
\begin{align*}
	\log\sigma&\sim\mathcal{N}(0,1),\\
	\mu_a&\sim\mathcal{N}(0,1),\\
	a&\sim\mathcal{N}(100\mu_a,1),\\
	b_{1:2}&\sim\mathcal{N}(0,100^2),\\
	y_i&\sim\mathcal{N}(a+b_it_i,\sigma^2).
\end{align*}
	\end{minipage}
	  \begin{minipage}{.78\linewidth}
	  \centering
	  {\small \bf Distributions of $a$, $b_1$, and $b_2$, in the reformulated model}
	  {\small
	  \begin{align*}
			a&\sim \mathcal{N}\left(\frac{100\mu_a\sigma^2+y_1+y_2-b_1t_1-b_2t_2}{2+\sigma^2},\frac{\sigma^2}{2+\sigma^2}\right),\\
	b_1&\sim \mathcal{N}\left(\frac{100^2t_1(y_1-100\mu_a)}{1+100^2t_1^2+\sigma^2},\frac{100^2+100^2\sigma^2}{1+100^2t_1^2+\sigma^2}\right),\\
	b_2&\sim \mathcal{N}\left(\frac{100^2t_2(y_2+\sigma^2y_2-y_1-100\sigma^2\mu_a+b_1t_1)}{100^2t_2^2+100^2\sigma^2t_2^2+2\sigma^2+\sigma^4},\frac{2*100^2\sigma^2+100^2\sigma^4}{100^2t_2^2+100^2\sigma^2t_2^2+2\sigma^2+\sigma^4}\right).
\end{align*}
}%
	\end{minipage}
	  \caption{The original model and the reformulated model of the simplified electric company model. $(t_1,y_1),(t_2,y_2)$ are given as data.
          }
	\label{figure:electric_company_model}
\end{figure*}
The eight schools model is very simple, but already requires user effort to reformulate.
To emphasize the complexity of reformulating larger models, in Figure \ref{figure:electric_company_model} we present a simplified version of the electric company model \cite{gelman2006data}.
The full model appears in Section \ref{exp:hlr}.
The observed variables are $y_1$ and $y_2$.
One can guess that the model can be reformulated because, conditioned on $\sigma$, all variables are normal with means that are affine functions of other variables.
However, the calculations are complex: the right side of Figure~\ref{figure:electric_company_model} shows a portion of the reformulated model.
In this version, HMC is run to sample $\mu_a$ and $\sigma$  (distributions not shown) conditioned on $y_1$ and $y_2$.
Then $a$, $b_1$, and $b_2$ are reconstructed conditioned on $\mu_a, \sigma, y_1, y_2$ by sampling from the shown distributions. 
By reducing the number of variables from 5 to 2, HMC inference can be accelerated.
However, it is extremely cumbersome for the user to derive the new model, which no longer corresponds to the originally conceived data generating process. 
We wish to automate this procedure so users only write the original model and our framework reformulates it.

\section{Automatically marginalized MCMC}

Given a program written by a user, our method will construct a graphical model and then manipulate it into a reformulated model for which MCMC samples fewer variables.
The key operation will be \emph{reversing} certain edges (based on conjugacy) to create unobserved leaf nodes that can be marginalized.
For example, in the eight schools model of Figure \ref{fig:eight_schools_procedure}, the edge from $x_i$ to $y_i$ is reversed (which has the side effect of creating edges from $\mu$ and $\tau$ to $y_i$), after which $x_i$ is a leaf.
In this section, we first develop the algorithm assuming a suitable graphical model representation.
Then, in Section \ref{sec:implementation}, we describe how we obtain such a representation in our implementation using JAX and NumPyro.


\subsection{Graphical model representation}
Assume there are $M$ random variables  $x_1, x_2,\dots,x_M$ where $x_i$ belongs to domain $\X_i$.
The full domain is $\X = \prod_i \X_i$.
For a set of indices $A$, we write $\x_A = (x_i)_{i \in A}$ and $\X_A = \prod_{i \in A} \X_i$. 
A graphical model $G$ is defined by specifying a distribution family for each node together with a mapping from parents to parameters.
Specifically, for node $i$, let $D_i$ represent its distribution family from a finite set of options (e.g., ``Normal'', ``Beta'', etc.), let $\pa(i) \subseteq \{1, \ldots, M\}$ be its parents, and let $f_i\colon \X_{\pa(i)} \to \Theta_i$ be a mapping such that $x_i$ has distribution $D_i(\theta_i)$ with parameters $\theta_i = f_i(\x_{\pa(i)})$.
For example, if $x_2\sim\mathcal{N}(x_1,1)$, then $D_2=\text{``Normal''}$, $\pa(2)=\{1\}$, and $f_2(x_1)=(x_1,1)$. Furthermore, for each distribution family, assume a density function and sampling routine are available. Let $p_i(x_i \given  \theta_i)$ be the density function for node $i$ and $h_i(u \given  \theta_i)$ be the sampling function, which maps a random seed $u$ to a sample from $D_i(\theta_i)$.
The parent relationship is required to be acyclic. 
Initially, nodes will be ordered topologically so that $\pa(i) \subseteq \{1, \ldots, i-1\}$.
Our algorithms will manipulate the graphical model to maintain acyclicity but will not preserve the invariant that nodes are numbered topologically.
In our example models we use standard notation for hierarchical models with variable names such as $\mu, \tau, x_i, y_i$; in these cases, the mapping to a generic sequence of random variables $x_1, \ldots, x_M$, the parent relationship, and the distribution families are clear from context.



With this representation, given concrete values of all variables, the log density can be computed easily as $\sum_{i=1}^M \log p_i\big(x_i \given f_i(\x_{\pa(i)})\big)$, assuming nodes are ordered topologically.
Generating a joint sample is similar: iterate through  nodes and sample $x_i = h_i\big(u \given f_i(\x_{\pa(i)})\big)$.
A key idea of our approach is that by factoring the log-density computation into the sequence of conditional functions $f_i$ for each random variable, we can manipulate the conditional distributions to achieve automatic marginalization.

\subsection{Computation graph representation}
\label{sec:computation_graph}
Our operations to transform the graphical model will require examining and manipulating the functions $f_i(\x_{\pa(i)})$ mapping parents to distribution parameters.
For example, in the electric company model of Figure~\ref{figure:electric_company_model}, we need to detect from the symbolic expression $y_2 \sim \mathcal{N}(a + b_2 t_2, \sigma^2)$ that the mean parameter is an affine function of $b_2$, which is required to reverse the edge $b_2 \to y_2$.
Similarly, we must manipulate symbolic expressions to obtain ones like those in the reformulated model.
For this purpose we assume functions are represented as computation graphs. 

Consider an arbitrary function $f(x_{i_1}, x_{i_2}, \ldots, x_{i_k})$ for $i_1, \ldots, i_k \in \{1, \ldots, M\}$.
We assume the computation graph of $f$ is specified as a sequence of $N_f$ \emph{primitive operations} that each write one value~\cite{griewank2008evaluating}, which is similar to the \emph{JAX expression (Jaxpr)} representation we can obtain from JAX.
Specifically, the sequence of values $w_1, w_2, w_3, \ldots, w_{k+N_f}$ are computed as follows: (1) the first $k$ values are the inputs to the function, i.e., $w_j = x_{i_j}$ for $j = 1$ to $k$, and (2) each subsequent value is computed from the preceding ones as $w_j=\phi_j(\mathbf w_{\pred(j)})$, where $\phi_j$ is a primitive operation (e.g., ``ADD'', ``MUL'', ``SQUARE'') on values $\mathbf w_{\pred(j)}$ and $\pred(j) \subseteq \{1,\ldots, j-1\}$ is the set of predecessors of $j$. 
The predecessor relationship defines a DAG for the variables in a computation graph.

We will also need to algorithmically manipulate computation graphs.
In the text, we will denote manipulations symbolically as follows.
Suppose $f(\x_\text{A})$ and $g(\x_\text{B})$ are two functions represented by computation graphs with potentially overlapping sets of input variables.
We use expressions such as $f*g$ or $f+g$ to mean the new computation graph representing this symbolic expression.
For example the computation graph for $f+g$ has input variables $\x_{\text{A}\cup \text{B}}$ and consists of the graphs for $f$ and $g$ together with one additional node (primitive operation) for the final addition.

\subsection{Marginalizing unobserved leaf nodes}

As a first useful transformation of the graphical model, we consider how to improve HMC if there is an unobserved leaf node. 
Without loss of generality, assume the leaf is numbered $M$.
Then we can factor the joint distribution as
\begin{equation}
  \label{eq:leaf_factorization}
  p(\x_{1:M}) = p(\x_{1:M-1}) p(x_M \given  \x_{1:M-1}),
  \end{equation}
and run HMC on the marginalized model $p(\x_{1:M-1})$, then sample $x_M$ directly from $p(x_M\given \x_{1:M-1})$ by executing $h_M(\,\cdot\,\given \,f_M(\x_{\pa(M)}))$.
Importantly, the marginal $p(\x_{1:M-1})=\prod_{i=1}^{M-1} p_i(x_i \given  f_i(\x_{\pa(i)}))$ is simply the original graphical model with the leaf node deleted, so it is tractable.
More generally, the argument is easily extended by repeatedly stripping leaves to marginalize all variables with no path to an observed variable for HMC, then to reconstruct those variables by \emph{forward sampling} (sometimes called \emph{ancestral sampling}) from the graphical model~\cite{koller2009probabilistic}.


\subsection{Marginalizing non-leaf nodes by edge reversals}
\label{sec:marginalize_reversal}

Generative models such as the eight schools and electric company models do not have unobserved leaf nodes in their original forms, since these nodes would play no useful role in the data-generating process.
Instead, our goal will be to transform the model by a sequence of \emph{edge reversals} to create unobserved leaf nodes.
Each edge reversal will preserve the joint distribution of the graphical model, so it is the same for performing inference.
However, it will not preserve the causal semantics of the data-generating process (which is not required for inference), so it is reasonable for the transformed model to have unobserved leaf nodes.


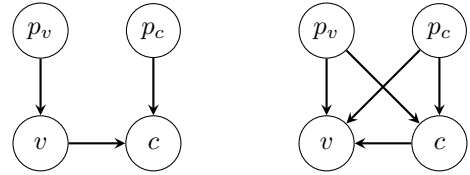
\begin{figure}
\centering
  \begin{minipage}{.45\linewidth}
  \begin{center}
		\begin{tikzpicture}
			\node[round,xshift=0cm, yshift=0cm](pav){$p_v$};
			\node[round,xshift=0cm, yshift=-1.5cm, label=center:$v$](v){\phantom{$p_v$}};
			\node[round,xshift=1.5cm, yshift=0cm](pac){$p_c$};
			\node[round,xshift=1.5cm, yshift=-1.5cm, label=center:$c$](c){\phantom{$p_v$}};
			\draw [arrow, ->] (pav) -- (v);
			\draw [arrow, ->] (pac) -- (c);
			\draw [arrow, ->] (v) -- (c);
		\end{tikzpicture}
		\end{center}
  \end{minipage}
  \begin{minipage}{0.45\linewidth}
  \begin{center}
		\begin{tikzpicture}
			\node[round,xshift=0cm, yshift=0cm](pav){$p_v$};
			\node[round,xshift=0cm, yshift=-1.5cm, label=center:$v$](v){\phantom{$p_v$}};
			\node[round,xshift=1.5cm, yshift=0cm](pac){$p_c$};
			\node[round,xshift=1.5cm, yshift=-1.5cm, label=center:$c$](c){\phantom{$p_v$}};
			\draw [arrow, ->] (pav) -- (v);
			\draw [arrow, ->] (pac) -- (c);
			\draw [arrow, ->] (c) -- (v);
			\draw [arrow, ->] (pav) -- (c);
			\draw [arrow, ->] (pac) -- (v);
		\end{tikzpicture}
		\end{center}
  \end{minipage}
  \caption{Reversing edge $v\to c$. Nodes $p_v\in\pa(v)$ and $p_c\in \pa(c)\setminus\{v\}$ are representative parents to demonstrate the transformation. Left: the graphical model before reversing $v\to c$. Right: the graphical model after reversing $v\to c$.}
\label{fig:reversing}
\end{figure}

\textbf{Reversing a single edge.}
The process of reversing a single parent-child edge $v \to c$ is illustrated in Figure~\ref{fig:reversing}.
There must be no other path from $v$ to $c$; otherwise reversing the edge would create a cycle. 
In the example, there is no other path because $v$ has only one child.
Let us define the ``local distribution'' of $x_v$ and $x_c$ as the product of the conditional distributions of those two variables given their parents, which looks like $p(x_v\given \cdots )p(x_c\given x_v, \cdots)$.
If these distributions satisfy the appropriate conjugacy relationship, we can derive replacement factors that look like $p(x_c \given  \cdots) p(x_v \given  x_c, \cdots)$ to ``reverse'' the $v \to c$ edge while preserving the local distribution.
Formally, the operation is:
\begin{definition}[Edge reversal]
\label{def:edge_reversal}
Assume $G$ is a graphical model where node $v$ is a parent of $c$ and there is no other path from $v$ to $c$. Reversing edge $v \to c$ replaces factors $p(x_v \given  \x_{\pa(v)}) p(x_c \given  x_v, \x_{\pa(c)\setminus \{v\}})$ by $p(x_c \given  \x_U) p(x_v\given x_c, \x_U)$ and updates the parent sets as $\pa'(c) = U$, $\pa'(v) = U \cup \{c\}$ for $U=\pa(v)\cup \pa(c)\setminus\{v\}$.
\end{definition}
It is easy to show that edge reversal yields a graphical model with the same joint distribution as the original.
\edited{A proof appears in Appendix \ref{proof:same_joint}.} \dan{TODO: Jinlin, please modify the previous statement to match what appears.}
To understand the utility of this operation, observe in Figure~\ref{fig:reversing} that node $v$ becomes a leaf and can be marginalized after reversing $v \to c$.
In principle, any edge can be reversed, but it is only tractable when one can derive the replacement factors.
We can do so if the distributions are \emph{locally conjugate}:
\begin{definition}[Local conjugacy]
  Let $G$ be a graphical model where node $v$ is a parent of $c$.
  We say the distribution of $x_v$ is \emph{locally conjugate} to the distribution of $x_c$ if $\hat p(x_v) := p(x_v \given \x_{\pa(v)})$ is conjugate to $\hat p(x_c \given x_v) := p(x_c \given x_v, \x_{\pa(c) \setminus \{v\}})$ for all values of $\x_{\pa(v)}$ and $\x_{\pa(c) \setminus \{v\}}$.
\end{definition}

For example, in the model $x_1\sim\mathcal{N}(0,1)$, $x_2\sim\mathcal{N}(x_1,1)$, $x_3\sim\mathcal{N}(x_1x_2,1)$,
the random variable $x_1$ is conjugate to $x_2$, and also conjugate to $x_3$ for fixed $x_2$.
So it is locally conjugate to both $x_2$ and $x_3$.
Section~\ref{sec:reversal} will describe the details of edge reversal using our graphical model representation and specific conjugate pairs of distribution families.
\edited{
  Local conjugacy should not be confused with the related concept of \emph{conditional conjugacy}~\cite{murphy2012machine}, which occurs when the complete conditional distribution $p(x_v \given \x_{\{1,\ldots,M\} \setminus \{v\}})$ is in the same family as the generative distribution $p(x_v \given \x_{\pa(v)})$. Conditional conjugacy describes the relationship of a single variable to the complete distribution, while local conjugacy is a pairwise relation. 
  }

\begin{figure}
\centering
  \begin{minipage}{.32\linewidth}
  \begin{center}
		\begin{tikzpicture}
			\node[round,xshift=0cm, yshift=0cm, label=center:$v$](v){\phantom{$c_1$}};
			\node[round,xshift=-0.75cm, yshift=-1.5cm](c1){$c_1$};
			\node[round,xshift=0.75cm, yshift=-1.5cm](c2){$c_2$};
			\draw [arrow, ->] (v) -- (c1);
			\draw [arrow, ->] (v) -- (c2);
			\draw [arrow, ->] (c1) -- (c2);
		\end{tikzpicture}
		\end{center}
  \end{minipage}
  \begin{minipage}{.32\linewidth}
  \begin{center}
		\begin{tikzpicture}
			\node[round,xshift=0cm, yshift=0cm, label=center:$v$](v){\phantom{$c_1$}};
			\node[round,xshift=-0.75cm, yshift=-1.5cm](c1){$c_1$};
			\node[round,xshift=0.75cm, yshift=-1.5cm](c2){$c_2$};
			\draw [arrow, ->] (v) -- (c1);
			\draw [arrow, ->,red] (c2) -- (v);
			\draw [arrow, ->] (c1) -- (c2);
		\end{tikzpicture}
		\end{center}
  \end{minipage}
    \begin{minipage}{.32\linewidth}
  \begin{center}
		\begin{tikzpicture}
			\node[round,xshift=0cm, yshift=0cm, label=center:$v$](v){\phantom{$c_1$}};
			\node[round,xshift=-0.75cm, yshift=-1.5cm](c1){$c_1$};
			\node[round,xshift=0.75cm, yshift=-1.5cm](c2){$c_2$};
			\draw [arrow, ->,blue] (c1) -- (v);
			\draw [arrow, ->] (v) -- (c2);
			\draw [arrow, ->] (c1) -- (c2);
		\end{tikzpicture}
		\end{center}
  \end{minipage}
    \caption{Challenges of reversing multiple outgoing edges. Left: the local structure. Middle: A loop is formed after reversing $v\to c_2$, which is invalid.
      Right: The model is still valid after reversing $v\to c_1$, and now $v\to c_2$ can also be reversed.}
\label{fig:reversing_multiple}
\end{figure}
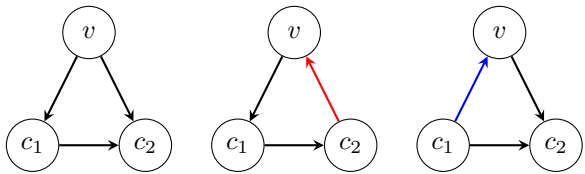

\textbf{Creating a leaf by reversing all outgoing edges of a node.}
We next consider how, if possible, to convert an arbitrary node $v$ to a leaf by reversing all of its outgoing edges.
Suppose $v$ has $H$ children $c_1,\dots,c_H$.
For an arbitrary child $c$ it may not be safe to reverse $v \to c$ even if $x_v$ is locally conjugate to $x_{c}$ because there may be another path from $v$ to $c$, which would lead to a cycle.
See Figure \ref{fig:reversing_multiple} for an example.
However, if $c$ is minimal among $c_1,\dots,c_H$ in a topological ordering of $G$, then there can be no other $v \to c$ path, so it is safe to reverse $v \to c$.
Further, after reversing the edge, $v$ will move in the topological ordering to appear after $c$ but before the other children, and the relative ordering of the other children will remain unchanged.
Then another child will be minimal in the topological ordering.
Therefore, if it is possible to convert $v$ to a leaf, we should reverse the edges from $v$ to each of its children following their topological ordering.
This reasoning is summarized in the following theorem, which is proved in Appendix \ref{proof:th1}.
\begin{theorem}
\label{thm:theorem1}
Let $G$ be a graphical model where node $v$ has children $c_1,\dots,c_H$.
If $x_v$ is locally conjugate to each of $x_{c_1},\dots,x_{c_H}$, then $v$ can be turned into a leaf by reversing the edges from $v$ to each child sequentially in the order of a topological ordering of the children. 
\end{theorem}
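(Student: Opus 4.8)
The plan is an induction on the number of edges reversed so far, carried by an invariant about a topological ordering. First I would fix a topological ordering $\prec$ of the initial graph $G$ and, using it, relabel the children of $v$ so that $c_1 \prec c_2 \prec \cdots \prec c_H$. Let $G_0 = G$ and let $G_j$ be obtained from $G_{j-1}$ by reversing $v \to c_j$. The invariant, asserted for $G_{j-1}$ at the start of step $j$, would be: (I1) the children of $v$ in $G_{j-1}$ are exactly $c_j, c_{j+1}, \ldots, c_H$; (I2) $G_{j-1}$ has a topological ordering in which $v \prec c_j \prec c_{j+1} \prec \cdots \prec c_H$; and (I3) $x_v$ is still locally conjugate to each of $x_{c_j}, \ldots, x_{c_H}$ in $G_{j-1}$. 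The base case $j=1$ holds by the hypotheses of the theorem, with $v \prec c_1$ since $v \in \pa(c_1)$.

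For the inductive step, the first task is to check that reversing $v\to c_j$ in $G_{j-1}$ is a legal application of Definition~\ref{def:edge_reversal}. Local conjugacy is provided by (I3), so it remains to rule out any other directed path from $v$ to $c_j$. This is where (I1) and (I2) do the work: any directed path leaving $v$ first steps to a child of $v$, hence by (I1) to one of $c_j,\dots,c_H$; were the path anything other than the single edge $v\to c_j$, it would contain a directed subpath from some $c_k$ with $k>j$ down to $c_j$, impossible since $c_j\prec c_k$ in the ordering of (I2). So the reversal is well defined (and, by the joint-distribution-preservation property stated just before Definition~\ref{def:edge_reversal}, $G_j$ still represents the original joint distribution).

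Then I would re-establish the invariant for $G_j$. The reversal deletes $v\to c_j$, adds $c_j\to v$, and rewires only edges among $U=\pa(v)\cup\pa(c_j)\setminus\{v\}$; it touches no other outgoing edge of $v$, and since acyclicity forces $c_j\notin U$ it creates no new child of $v$, giving (I1) for $j+1$. For (I2), I would start from the ordering in (I2) and slide $v$ to the slot immediately after $c_j$, then verify this is a topological ordering of $G_j$: the new parents $U\cup\{c_j\}$ of $v$ all lie before that slot; the remaining children $c_{j+1},\dots,c_H$ all lie after $c_j$; nothing strictly between $v$'s old slot and $c_j$ is a child of $v$, because such a child would be one of $c_j,\dots,c_H$ and so could not precede $c_j$; and the rewired edges out of $U$ into $c_j$ are harmless since each such node already precedes $c_j$. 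This yields $v\prec c_{j+1}\prec\cdots\prec c_H$ in $G_j$, which is (I2) for $j+1$. For (I3), the reversal alters only the factors attached to $c_j$ and $v$; conjugacy guarantees the replacement factor $p(x_v\given x_{c_j},\x_U)$ stays in the family $D_v$, while the distribution families and parent sets of $c_{j+1},\dots,c_H$ are untouched, so $x_v$ is still locally conjugate to each of them. After step $H$, invariant (I1) read for $j=H+1$ says $v$ has no children, i.e.\ $v$ is a leaf.

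I expect the delicate part to be the purely combinatorial bookkeeping behind (I1)--(I2): showing that at every reversal the edge $v\to c_j$ is the unique directed path from $v$ to $c_j$, and that the topological order can be repaired by moving $v$ just past $c_j$ (equivalently, that $v$'s shift in the ordering leaves the relative order of the remaining children intact). By contrast, the conjugacy bookkeeping (I3) is essentially a one-line observation — conjugate updates keep $x_v$ in its original family, and reversals never disturb the not-yet-processed children — so it should pose no real difficulty.
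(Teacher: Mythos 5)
Your proposal is correct and follows essentially the same route as the paper's proof: an induction over the reversals maintaining the same three invariants (remaining children, their topological order, and local conjugacy), with the key step being that the minimal remaining child admits no other directed path from $v$, so each reversal is legal and preserves the invariants. Your version is slightly more explicit about repairing the topological ordering by sliding $v$ past $c_j$, but the argument is the same in substance.
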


\textbf{Marginalizing many non-leaf nodes.}
Theorem~\ref{thm:theorem1} describes how to modify a graphical model, while preserving the joint distribution, to convert one non-leaf node to a leaf so it can be marginalized.
We now wish to use this operation to marginalize as many nodes as possible.
The $\MARGINALIZE$ function in Algorithm~\ref{algorithm1} presents our heuristic for doing so: it simply applies the operation of Theorem~\ref{thm:theorem1} to attempt to marginalize every node $v$ in \emph{reverse} topological order.
This is convenient because it automatically strips all nodes with no path to an observed variable at the same time.
If $v$ can be marginalized, the reversal operations are executed and $v$ is removed from $G$ and pushed onto a stack $S$ that determines the recovery order.
\dan{Low priority: justify recovery order }
The $\RECOVER$ function augments a sample of the non-marginalized variables with direct samples of the marginalized variables.
The next sections discuss the implementations of $\CONJUGATE$ and $\REVERSE$.

\begin{algorithm}[t]
   \caption{Marginalize and recover unobserved nodes}
   \label{algorithm1}
   \begin{algorithmic}[1]
\STATE  \textbf{function} MARGINALIZE ($G$)
\begin{ALC@g}
                        \STATE Initialize stack $S$ and sort nodes so they are numbered in topological order
                        \FOR {each unobserved node $v$ in descending order}
				\IF {$\text{CONJUGATE}(G,v,c)$ for all children $c$}
					\STATE // Marginalize $v$
					\FOR {each child $c$ in ascending order}
						\STATE $G = \text{REVERSE}(G,v,c)$
					\ENDFOR
					\STATE $\text{Remove }v\text{ from }G$
					\STATE $\text{Add }v\text{ to top of }S$
				\ENDIF
			\ENDFOR
			\STATE \textbf{return} $G$, $S$\\[5pt]
            \end{ALC@g}
\STATE  \textbf{function} RECOVER $(S, x_{v}\text{ for nodes } v\text{ not removed})$
            \begin{ALC@g}
			\WHILE {$S$ is not empty}
				\STATE $v=\text{pop}(S)$
				\STATE Sample $x_v$ given $x_{pa(v)}$
			\ENDWHILE
			\STATE \textbf{return} $x_{1:M}$
            \end{ALC@g}
\end{algorithmic}
\end{algorithm}

\subsection{Conjugacy detection}
\label{sec:conjugacy_detection}
Detecting when $x_a$ is locally conjugate to $x_b$ uses the patterns listed in Table \ref{table:conjugacy_pattern}, where
(1) $\AFFINE(u,v)$ means that $u$ can be written as $u=pv+q$ for expressions $p$ and $q$ that do not contain $v$,
(2) $\DEPENDENT(u,v)$ means that there exists a path from $v$ to $u$ in the computation graph, and
(3) $\LINEAR(u,v)$ means that $u$ can be written as $u=pv$, for an expression $p$ that does not contain $v$.
\begin{table*}[t]
\caption{Patterns of conjugacy. If conditions of a row are satisfied, then the distribution of $x_a$ is locally conjugate to the distribution of $x_b$.}
\label{table:conjugacy_pattern}
\vskip 0.15in
\begin{center}
\begin{small}
\begin{sc}
\begin{tabular}{cccc}
\toprule
Distribution of $x_a$ & Distribution of $x_b$ & Condition 1 &Condition 2\\
\midrule
$\mathcal{N}(\mu_a,\sigma_a^2)$& $\mathcal{N}(\mu_b,\sigma_b^2)$&$\text{AFFINE}(\mu_b,x_a)$&$\text{not}\ \text{DEPENDENT}(\sigma_b,x_a)$\\
$\Gamma(\alpha_a,\beta_a)$& $\Gamma(\alpha_b,\beta_b)$&$\text{LINEAR}(\beta_b,x_a)$&$\text{not}\ \text{DEPENDENT}(\alpha_b,x_a)$\\
$\Gamma(\alpha_a,\beta_a)$& $\text{Exponential}(\lambda_b)$&$\text{LINEAR}(\lambda_b,x_a)$&-\\
$\text{Beta}(\alpha_a,\beta_a)$& $\text{Binomial}(n_b,p_b)$&$p_b=x_a$&$\text{not}\ \text{DEPENDENT}(n_b,x_a)$\\
$\text{Beta}(\alpha_a,\beta_a)$& $\text{Bernoulli}(\lambda_b)$&$\lambda_b=x_a$&-\\
\bottomrule
\end{tabular}
\end{sc}
\end{small}
\end{center}
\vskip -0.1in
\end{table*}
For example, the pattern in the first matches the case when $x_a$ and $x_b$ both have normal distributions, in which case we can extract expressions (computation graphs) for the parameters $\mu_b$ and $\sigma_b^2$ as the two outputs of the expression $f_b(\x_{\pa(b)})$.
The pattern further implies that if $\AFFINE(\mu_b,x_a)$ is true and $\DEPENDENT(\sigma_b^2,x_a)$ is false, then $x_a$ is locally conjugate to $x_b$.
The functions $\AFFINE$, $\LINEAR$ and $\DEPENDENT$ require examining computation graphs; details and pseudocode can be found in appendix \ref{sec:detail_conjugacy}.
\edited{While we focus on five common patterns, it is possible to introduce other patterns\footnote{For more background and patterns of conjugacy, please refer to \url{https://en.wikipedia.org/wiki/Conjugate_prior}.} to our pipeline, and may be possible to utilize automated conjugacy detection based on properties of exponential families as done in autoconj~\cite{hoffman2018autoconj}.}

\subsection{Edge reversal details}
\label{sec:reversal}
If conjugacy is detected, Algorithm \ref{algorithm1} will call the $\REVERSE$ operation to reverse an edge. 
Algorithm \ref{algorithm2} shows the portion of the $\REVERSE$ algorithm for normal-normal conjugacy.
We emphasize that operations like $+$, $-$ and $*$ are symbolic operations on computation graphs.
The algorithm implements well known Gaussian marginalization and conditioning formulas.
\dan{Would be nice to give mathematical formulas in typical format for comparison.}
Line~5 extracts the symbolic expressions for the parameters of the normal distributions.
Line~6 extracts expressions $p$ and $q$ such that $\mu_c = p x_v + q$; conjugacy detection has already determined that such expressions exist. \edited{The function AFFINE\_COEFF is defined in Appendix \ref{sec:detail_reversing}.}
Lines~7--12 compute symbolic expressions for parameters of the marginal $p(x_c | \cdots)$ and conditional $p(x_v | x_c, \cdots)$ and write them to $f_c$ and $f_v$.
Finally, Lines~15--16 update the DAG to reflect the new dependencies.
\begin{algorithm}[t]
   \caption{Reversing an edge (normal-normal case)}
   \label{algorithm2}
\begin{algorithmic}[1]
\STATE  \textbf{function} REVERSE ($G=(D_i,\pa(i),f_i)_{i=1}^M$,$v$,$c$)
       \begin{ALC@g}
       		\STATE // $\pa(v), \pa(c), f_v, f_c, D_c$ updated in place
                \STATE // all variables represent symbolic expressions\\[2pt]
			\IF {$D_c$ is normal \textbf{and} $D_v$ is normal}
			\STATE Let $\mu_v$ and $\sigma_v^2$ be the two output expressions $f_v$, and $\mu_c$ and $\sigma^2_c$ be the output expressions of $f_c$.
			\STATE $p,q=\text{AFFINE\_COEFF}(\mu_c,x_v)$
			\STATE $k=\sigma_v^2p/(p^2\sigma_v^2+\sigma_c^2)$
			\STATE $\mu_c'=p\mu_v+q$
			\STATE $\sigma_c'^2 = p^2\sigma_v^2+\sigma_c^2$
			\STATE $\mu_v'=\mu_v+k(x_c-\mu_c')$
			\STATE $\sigma_v'^2 = (1-kp)\sigma_v^2$
			\STATE $f_c=(\mu_c',\sigma_c'^2)$, $f_v=(\mu_v',\sigma_v'^2)$
			\ELSE
                        \STATE $\vdots$ \ \ \ \ \ (see full algorithm in Appendix \ref{sec:detail_reversing}) \\[3pt]
			\ENDIF
			\STATE $\pa(c)=(\pa(c)\setminus \{v\})\cup \pa(v)$	
			\STATE $\pa(v)=\pa(v)\cup \{c\}\cup \pa(c)$
			\STATE \textbf{return} $G$
       \end{ALC@g}
\end{algorithmic}
\end{algorithm}


\subsection{Implementation}
\label{sec:implementation}
We have assembled the pieces for automatically marginalized HMC.
The full pipeline is to: (1) extract a graphical model $G$ from the user's program, (2) call the MARGINALIZE function to get a marginalized model $G'$ and recovery stack $S$, (3) run HMC on $G'$, (4) for each HMC sample $\x$, call RECOVER$(S, \x)$ to sample the marginalized variables.  

Our implementation uses JAX \cite{jax2018github} and NumPyro \cite{bingham2018pyro, phan2019composable} to extract a graphical model $G$.
We use JAX tracing utilities to convert the NumPyro program to a JAX expression (Jaxpr), i.e., computation graph, for the entire sampling procedure.
The NumPyro program must use a thin wrapper around NumPyro's sample statement to register the model's random variables in the Jaxpr. 
We extract the distribution families from the NumPyro trace stack and obtain the parameter functions $f_i$ by parsing the Jaxpr to extract the partial computation mapping from parent random variables to distribution parameters.
As stated earlier, our approach is limited to programs that map to a graphical model, which means they sample from a fixed sequence of conditional distributions.
This closely matches those programs for which NumPyro can currently perform inference, because the JIT-compilation step of NumPyro inference requires construction of a static computation graph.
NumPyro's experimental control flow primitives (``scan'' and ``cond'') are not supported, and it may be difficult to do so. 
Our current implementation is limited to Jaxprs with scalar operations and elementwise array operations, though this restriction is not fundamental. 
We expect our approach is compatible with other PPLs that use computation graphs, with similar restrictions on programs. 

\edited{With the JAX-based implementation, the complexity of a marginalized computation graph is evaluated by the number of lines of Jaxprs.
  We have not proven a bound for the complexity of the computation graph after marginalization, but observe in experiments that the increase in size is mild.}

\section{Related work}
\label{sec:related}
Conjugacy and marginalization have long been important topics in probabilistic programming.
In BUGS \cite{lunn2000winbugs} and JAGS \cite{hornik2003jags}, conjugacy was used to improve automatic Gibbs sampling.
\edited{These systems use conjugacy to derive complete conditional distributions, not to marginalize variables.}
Hakaru \cite{narayanan2016probabilistic} and PSI \cite{gehr2016psi, gehr2020lpsi}
use symbolic integrators to perform marginalization for the purposes of exact inference.
We make use of information provided by graphical models to identify certain patterns, which is more efficient in large scale models.
Autoconj \cite{hoffman2018autoconj} proposes a term-graph rewriting system that can be used for marginalizing a log joint density with conjugacy.
Our approach is distinct in that we operate on the graphical model and computation graph for the generative process, as opposed to the log-density.
\edited{Also, as mentioned in the introduction, while autoconj provides primitives for marginalizing variables from a log-density, the user must also specify how they are applied, in order to marginalize variables or achieve their inferential goal.
  In contrast, we develop an end-to-end algorithm for transforming a graphical model; the user only needs to supply the original model. In the future, it may be possible to combine autoconj primitives with our graphical model approach.}

\citet{gorinova2021conditional} propose an information flow type system that could be applied to automatic marginalization of discrete random variables.
Following the exploration of more expressive PPLs, streaming models have attracted much attention.
\citet{MurrayLKBS18} proposed delayed sampling, which uses automatic marginalization to improve inference via the Rao-Blackwellized particle filter (RBPF) \cite{doucett2000rao}.
Delayed sampling has been developed in Birch \cite{murray2018automated}, Pyro \cite{bingham2018pyro} with funsors \cite{obermeyer2019functional,obermeyer2019tensor}, Anglican \cite{lunden2017delayed} and ProbZelus \cite{baudart2020reactive}. \citet{AtkinsonSEMI22} propose semi-symbolic inference, which further expands the applicability of delayed sampling to models with arbitrary structure.
\dan{changed ``reactive'' to ``streaming''}
Our work is distinct in that we statically analyze a model prior to performing inference for the purpose of improving MCMC: this makes our approach ``fully symbolic'' (no concrete values are available) and leads to different algorithmic considerations, though our Algorithm \ref{algorithm1} shares technical underpinnings with the hoisting algorithm in \citet{AtkinsonSEMI22}; see Appendix \ref{sec:relation_hoisting} for more details.

\edited{Manually reformulating a model by marginalizing variables to achieve more efficient inference is a well known trick in applied probabilistic modeling and is also referred to as \emph{collapsed sampling} in some contexts. Specifically, collapsed Gibbs sampling (CGS) \citep{liu1994collapsed} is an algorithm that marginalizes conjugate prior variables and recovers them afterwards to facilitate Gibbs sampling (GS). It can be applied to mixture models such as latent dirichlet allocation (LDA) models \citep{porteous2008fast,murphy2012machine}. Often, the remaining variables after marginalization are discrete, so it is possible to derive any conditional distributions for GS by normalization. The key distinction of our work is that the model is automatically reformulated, instead of doing so manually.
Although we focused on HMC, our methods could also be used to automatically collapse models for Gibbs sampling or other MCMC algorithms.}


There are many works that improve HMC inference in PPLs from different perspectives.
Stan \cite{carpenter2017stan} has had tremendous impact using HMC inference for PPLs.
Because Stan programs specify a log-density and not a sampling procedure, our idea does not directly apply to Stan programs. However, many Stan programs are generative in spirit, and \citet{baudart2021compiling} characterize a subset of Stan programs on which the methods of this paper can be applied directly.
\citet{papaspiliopoulos2007general} propose a general framework for non-centered (re)parameterization in MCMC. \citet{gorinova2020automatic} automate the procedure of choosing parameterizations of models using variational inference.
In \citet{parno2018transport} and \citet{hoffman2019neutra}, the parameterizations of all latent variables are learned as normalizing flows \cite{PapamakariosNRM21,rezende2015variational}.
In models where some variables are marginalizable, our method works better than reparameterization: 
see Section \ref{exp:hlr} for an example.
\citet{mak2022nonparametric} use the framework of involutive MCMC \cite{neklyudov2020involutive, cusumano2020automating} to extend the applicability of MCMC to non-parametric models in PPLs.

\section{Experiments}
We evaluate the performance of our method on two classes of hierarchical models where conjugacy plays an important role.
We use NumPyro's no-U-turn sampler (NUTS) \cite{HoffmanG14} in all experiments, denoted HMC hereafter.
Our approach is ``HMC with marginalization'' (HMC-M).
For all experiments, we use 10,000 warm up samples to tune the sampler, 100,000 samples for evaluation, and evaluate performance via effective sample size (ESS) and time (inclusive of JAX compilation time).

\subsection{Hierarchical partial pooling models}
A hierarchical partial pooling (HPP) model \cite{gelman1995bayesian} has the form
$p(\theta,z_{1:n},y_{1:n})=p(\theta)\prod_{i=1}^n p(z_i \given \theta)p(y_i\given \theta,z_i,x_i)$,
where $(x_i, y_i)$ are observed covariate and response values for the $i$th data point, $z_i$ is a local latent variable, and $\theta$ is a global latent variable to model shared dependence.
In some HPPs, the distribution of $z_i$ is chosen to be a conjugate to $y_i$.
The eight schools model is one such example.
Figure \ref{fig:eight_schools_samples} shows samples of $\theta=[\mu,\tau]$ for the eight schools model obtained by both HMC and HMC-M.
Without marginalization, HMC struggles to sample small values of $\tau$ and the chain gets stuck due to the ``funnel'' relationship between $\tau$ and $x_i$~\cite{papaspiliopoulos2007general}.
With marginalization, both $x_i$ and the funnel are eliminated (from HMC), and the quality of the final samples significantly improves.
\begin{figure}
\centering
\includegraphics[width=0.23\textwidth, trim={1cm 1cm 1cm 1cm},clip]{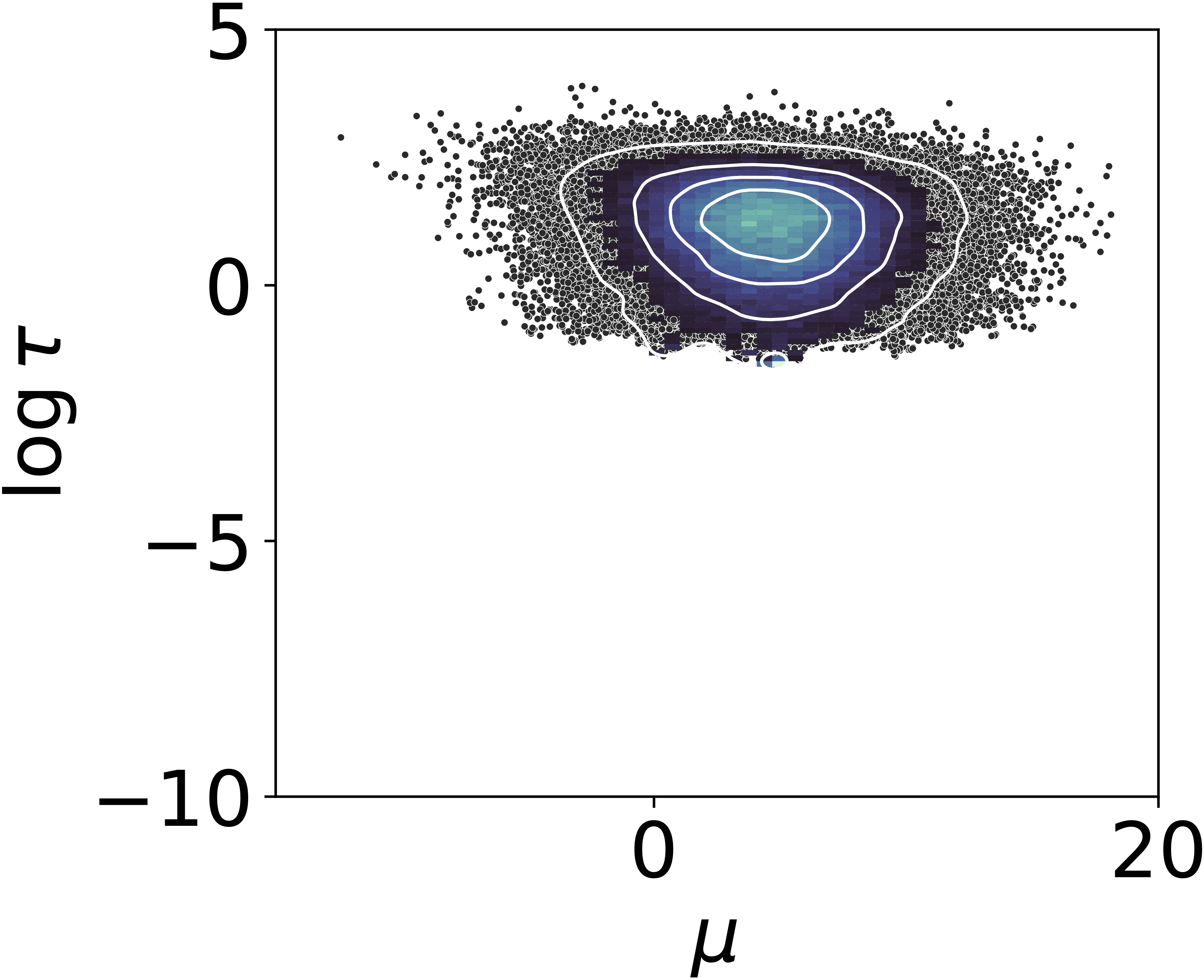}
\includegraphics[width=0.23\textwidth, trim={1cm 1cm 1cm 1cm},clip]{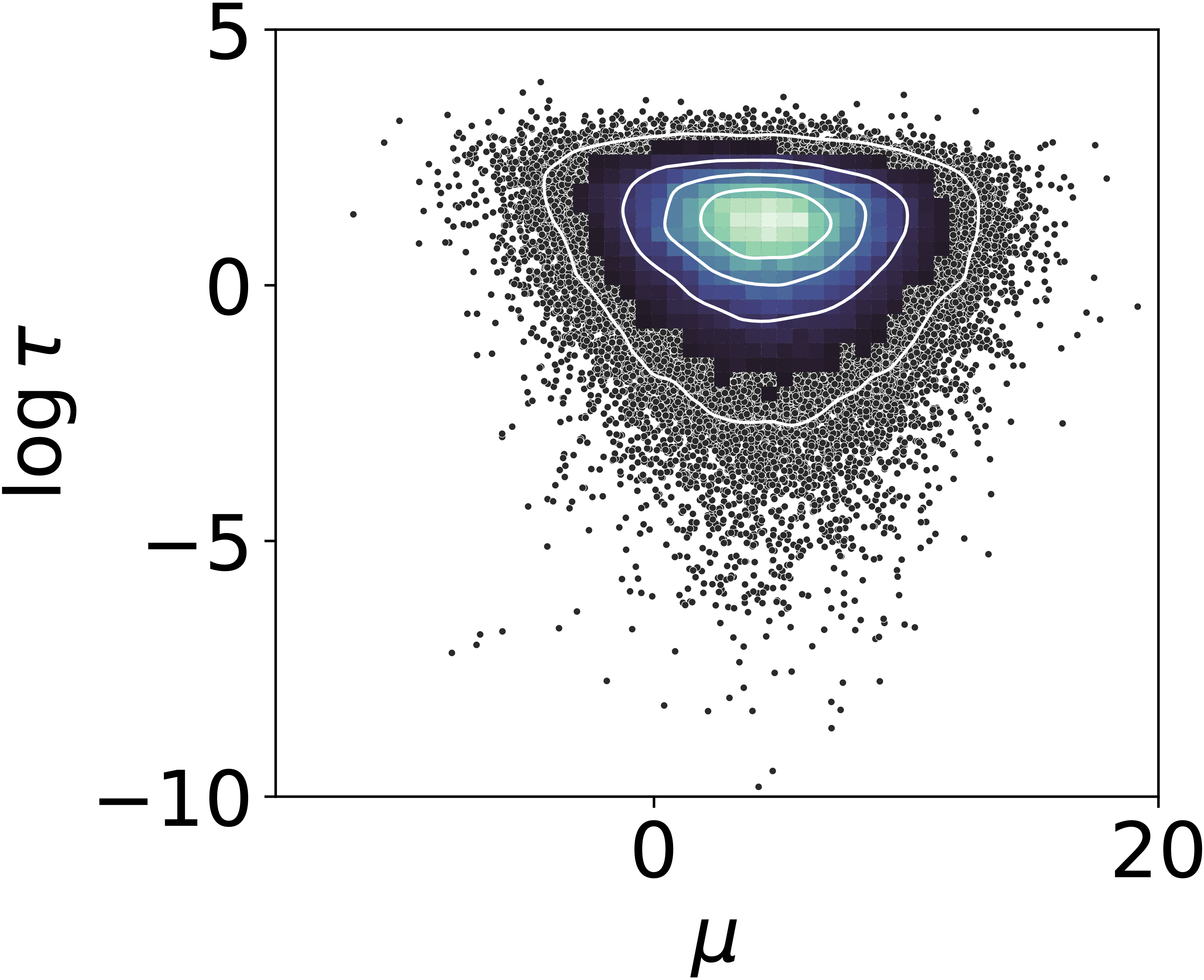}
\caption{Histogram and contour of samples from $\mu$ and $\log\tau$ from the eight schools model with different algorithms. 
In both cases, 100k samples were taken. 
Left: using HMC, with very low ESS, the chain fails to mix getting stuck in a particular region with almost no exploration of low values of $\tau$. 
Right: using HMC-M, the posterior is evenly explored.} 
\label{fig:eight_schools_samples}
\end{figure}

Another application of HPPs is repeated binary trials, where we observe the number of successes $y_i$ out of $K_i$ trials for each unit $i$, and assume a partially shared structure for the success probabilities, such as \cite{carpenter2017stan}:
\begin{gather}
	m\sim \text{Uniform}(0,1),\ \kappa\sim \text{Pareto}(1,1.5),\notag\\
	\theta_i\sim \text{Beta}(m\kappa,(1-m)\kappa),\ y_i\sim \text{Binomial}(K_i,\theta_i).\notag
\end{gather}
Applications include the rat tumors dataset~\cite{tarone1982use}, the baseball hits 1970 dataset~\cite{efron1975data} and the baseball hit 1996 AL dataset~\cite{carpenter2017stan}.
This model is again difficult for HMC due to a funnel relationship between $\kappa$ and $\theta_i$~\cite{carpenter2017stan}.
Suggested remedies are to model $\kappa$ with an exponential distribution \cite{patil2010pymc} or rewrite the model to one where reparameterization is applicable \cite{carpenter2017stan}.

We observe that, since $\theta_i$ (Beta) is locally conjugate to $y_i$ (Bernoulli), marginalization is a better strategy.
In the marginalized model, $y_i$ is a beta-binomial random variable, HMC samples only $m$ and $\kappa$, and each  $\theta_{i}$ is sampled afterward from $p(\theta_i \given m, \kappa, y_i)$, a beta distribution.
The funnel problem is eliminated and the HMC dimension is reduced from $n+2$ to 2.
Our methods achieve this automatically. 

Table \ref{table:hierarchical_partial_pooling} shows the results.
Sampling $\kappa$ is known to be difficult in this model, but HMC-M achieves an ESS with similar magnitude to the number of samples.
The HMC problem dimension is also reduced, which leads to faster running time.
These factors combined lead to more than 100x ESS/s improvement on the baseball hit 1996 AL data set. \edited{The same experiment is also conducted on Stan by manually rewriting the models using Algorithm \ref{algorithm1}. We confirm that Stan could also benefit from the same model transformations if we implemented our methods in that context. See Appendix \ref{sec:stan_res} for details.}
\begin{table*}[t]
\caption{Min ESS across all dimensions, time (s) and min ESS/s for HMC and HMC-M on the repeated binary trials model. Mean and std over 5 independent runs are reported. \javier{how many samples?}\jinlin{All the experiments are warming up for 10,000 steps and sampling for 100,000 steps, which is described in the first paragraph of the experiment section. }}
\label{table:hierarchical_partial_pooling}
\vskip 0.15in
\begin{center}
\begin{small}
\begin{sc}
\begin{tabular}{ccccc}
\toprule
Dataset &Algorithm& Min ESS & Time (s) & Min ESS/s\\
\midrule
\multirow{ 2}{*}{Baseball hits 1970 ($n=18$)}&HMC&1384.1 (1156.7) & \textbf{94.5} (5.7) & 14.8 (12.7)\\
&HMC-M&\textbf{39001.8} (20030.4) & 110.5 (89.2) & \textbf{592.3} (304.2)\\
\midrule
\multirow{ 2}{*}{Rat tumors ($n=71$)}&HMC&24632.3 (1494.5) & 654.8 (43.9) & 37.7 (2.1)\\
&HMC-M&\textbf{77644.5} (9570.8) & \textbf{72.4} (0.3) & \textbf{1072.7} (134.0)\\
\midrule
\multirow{ 2}{*}{Baseball hits 1996 AL ($n=308$)}&HMC&9592.3 (260.1) & 2746.1 (107.6) & 3.5 (0.2)\\
&HMC-M&\textbf{61109.0} (3344.9) & \textbf{130.9} (1.4) & \textbf{467.0} (29.5)\\
\bottomrule
\end{tabular}
\end{sc}
\end{small}
\end{center}
\vskip -0.1in
\end{table*}

\subsection{Hierarchical linear regression}
\label{exp:hlr}
Similar to partial pooling, hierarchy can be introduced in linear regression models.
We demonstrate with two examples how our methods can improve inference in such models.

\textbf{Electric company:}
The electric company model \cite{gelman2006data} studies the effect of an educational TV program on children's reading abilities.
There are $C=192$ classes in $G=4$ grades divided into $P=96$ treatment-control pairs.
Class $k$ is represented by $(g_k,p_k,t_k,y_k)$ where $g_k$ is the grade, $p_k$ is the index of pair, $t_k\in\{0,1\}$ is the treatment variable and $y_k$ is the average score.
The classes in pair $j$ belong to grade $\text{gp}[j]$.
The full model is
\begin{gather}
	\mu_{i}\sim\mathcal{N}(0,1),\ a_{j}\sim\mathcal{N}(100\mu_{\text{gp}[j]},1),\notag\\
	b_i\sim\mathcal{N}(0,100^2),\ \log\sigma_i\sim\mathcal{N}(0,1),\notag\\
	y_k\sim\mathcal{N}(a_{p_k}+t_kb_{g_k},\sigma_{g_k}^2).\notag
\end{gather}
where $i\in\{1,\dots,G\}$, $j\in\{1,\dots,P\}$ and $k\in\{1,\dots,C\}$.
Observe that $\mu_i$, $a_j$, $b_i$ and $y_k$ are all normally distributed with affine dependencies.
Therefore, it is possible to marginalize $\mu_i$, $a_j$ and $b_i$ from the HMC process.
As Section \ref{sec:small_electric_company} points out, it is very difficult to manually do so, but 
our methods do so automatically.

We observed that marginalization of $\mu_i$ led to very high JAX compilation times even though the computation graph for the log-density was not much larger than the one before marginalization (14606 primitive operations vs.\ 9186). 
We attribute this to a current JAX limitation.
See Appendix \ref{sec:slow_compilation} for experimental evidence.
As a workaround, we manually prevented $\mu_i$ from being marginalized.

Figure \ref{fig:electric_company} shows the results.
In this model, HMC on the original model performs poorly, but reparameterizing the $a_j$ variables is very helpful: this alternative is shown as HMC-R, and achieves excellent ESS (comparable to the number of samples). \edited{However, it is essential to note that it requires user intervention, either by employing the tools provided in \citet{gorinova2018automatic} or by implementing the reparameterization. It is also crucial to recognize that non-centered reparameterization may not consistently enhance the model and could, in some cases, prove detrimental \citep{gorinova2018automatic}. }
In addition, HMC-R does not reduce the dimension and solves a $3G+P=108$ dimension problem, while automatic marginalization reduces the problem dimension to $8$ and results in an additional 4x speed up.
\begin{figure}
\centering
\includegraphics[width=0.4\textwidth]{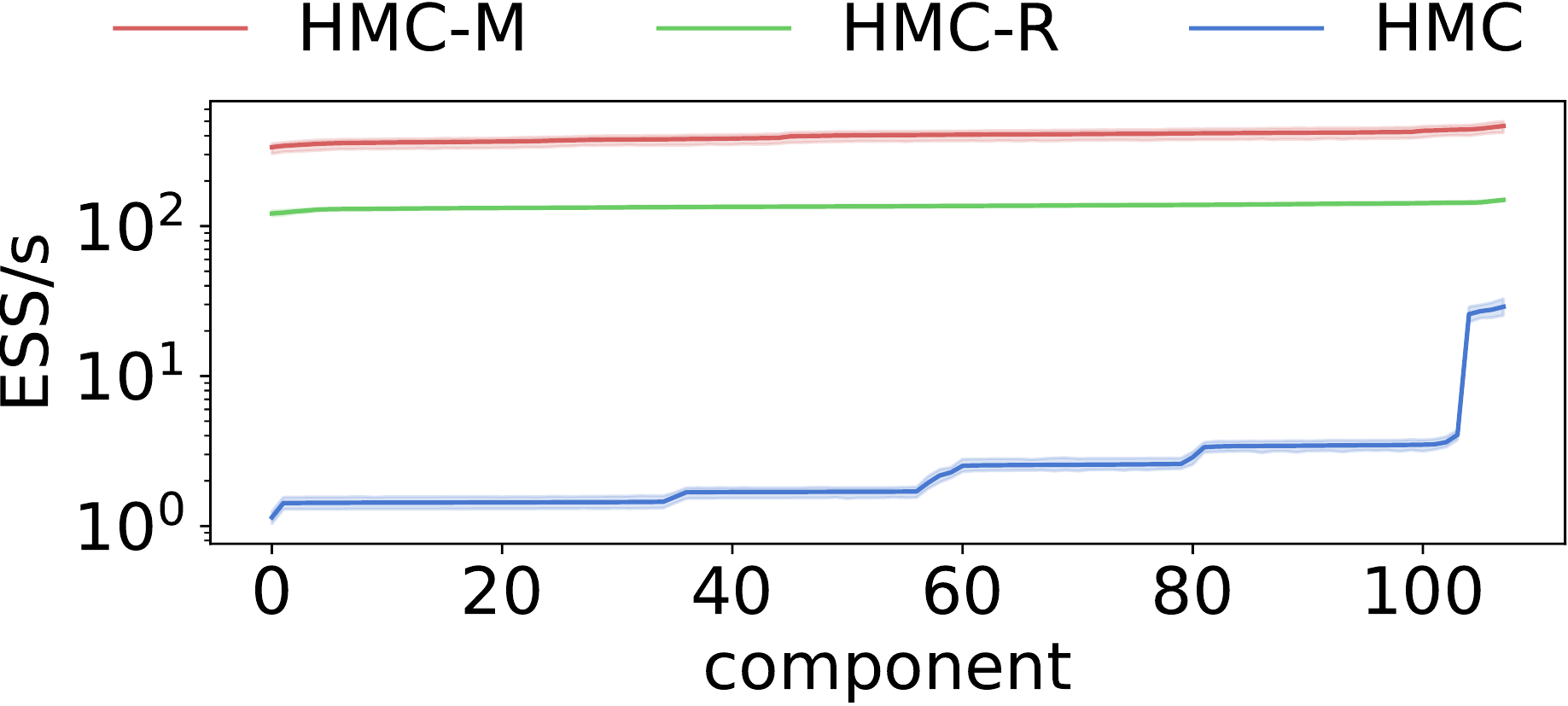}	
\caption{Component-wise ESS/s on the electric company model. Our method (HMC-M) is compared against HMC and HMC with reparameterization (HMC-R). Components ordered by ESS/s. }
\label{fig:electric_company}
\end{figure}

\textbf{Pulmonary fibrosis:} The Pulmonary fibrosis dataset \cite{osic-pulmonary-fibrosis-progression}
has patient observation records over time of forced vital capacity (FVC), a disease indicator.
The FVC of each patient is assumed to be linear with respect to time and regression slopes and intercepts are generated by a hierarchical model. \edited{Records form the pulmonary fibrosis dataset \cite{osic-pulmonary-fibrosis-progression} have the form $(\text{ID}_i,t_i,y_i)$, where $\text{ID}_i$ is the patient id, $t_i$ is the observation time, and $y_i$ is forced vital capacity (FVC), a measure of disease progression.
The FVC of each patient is assumed to be linear with respect to time and regression slopes and intercepts are generated by the following hierarchical model:
\begin{gather}
	\mu_\alpha\sim\mathcal{N}(0,500^2),\ \sigma_\alpha\sim \HalfCauchy(100),\notag\\
	\mu_\beta\sim\mathcal{N}(0,3^2),\ \sigma_\beta\sim \HalfCauchy(3),\notag\\
	\alpha_j\sim\mathcal{N}(\mu_\alpha,\sigma_\beta^2),\ \beta_j\sim\mathcal{N}(\mu_\beta,\sigma_\beta^2),\notag\\
	\sigma\sim\HalfCauchy(100),\ y_i\sim\mathcal{N}(\alpha_{\text{ID}_i}+t_i\beta_{\text{ID}_i},\sigma^2),\notag
\end{gather}
where $i\in\{1,\ldots,549\}$ and $j\in\{1,\ldots,173\}$.}
We again prevent two top-level variables from being marginalized and remove $\frac{2}{3}$ of the data points due to slow JAX compilation.
Under the settings, HMC-M outperforms HMC and HMC-R by producing more effective samples in less time (Table \ref{table:pulmonary_fibrosis}).

\begin{table}[t]
\caption{Min ESS across all dimensions, time (min) and min ESS/s for HMC, HMC-R and HMC-M on the pulmonary fibrosis model. Mean and std over 5 independent runs are reported.}
\label{table:pulmonary_fibrosis}
\vskip 0.15in
\begin{center}
\begin{small}
\begin{sc}
\begin{tabular}{cccc}
\toprule
Algorithm& Min ESS & Time (min) & Min ESS/s\\
\midrule
HMC&19817 (1207) & 51.8 (0.1) & 0.1 (0.0)\\
HMC-R&11362 (1567) & 51.5 (1.3) & 0.1 (0.0)\\
HMC-M&\textbf{96135} (557) & \textbf{14.9} (1.3) & \textbf{1.8} (0.1)\\
\bottomrule
\end{tabular}
\end{sc}
\end{small}
\end{center}
\vskip -0.1in
\end{table} 
\section{Discussion}
We proposed a framework to automatically marginalize variables in a graphical model obtained from a PPL for the purpose of accelerating MCMC.
Our results show significant performance improvements in models with conjugacy.
The process can be fully automated to free users from cumbersome derivations and implementations.
The method is limited to graphical models, which excludes some PPL features but covers a huge range of applied statistical models.
Our current implementation is limited to scalar and elementwise array operations.
An important direction for future work is to support a wider range of array operations, including matrix operations, indexing, slicing, and broadcasting.
Another interesting direction is to introduce automatic marginalization to MCMC applicable to higher order PPLs. We believe this work is an important step towards an automatic MCMC system that performs well on a wide range of models with minimal input from users.

\section*{Acknowledgements}
We thank Justin Domke and the anonymous reviewers for comments that greatly improved the manuscript. This material is based upon work supported by the National Science Foundation under Grant Nos. 1749854 and 1908577.
\bibliography{reference}
\bibliographystyle{icml2023}

\newpage
\appendix
\onecolumn
\javier{The listing for Algorithm~\ref{alg:full_reversing} should be in the first or, at most, second section of the Appendix. 
It is very important.} \jinlin{I think all parts are important. }

\section{Proof of the correctness of Definition \ref{def:edge_reversal}}
\label{proof:same_joint}
\addtocounter{definition}{-2}
Definition \ref{def:edge_reversal} in the main text is
\begin{definition}[Edge reversal]
  Assume $G$ is a graphical model where node $v$ is a parent of $c$ and there is no other path from $v$ to $c$. Reversing the $v \to c$ edge replaces factors $p(x_v \given  \x_{\pa(v)}) p(x_c \given  x_v, \x_{\pa(c)\setminus \{v\}})$ by $p(x_c \given  \x_U) p(x_v\given x_c, \x_U)$ and updates the parent sets as $\pa'(c) = U$, $\pa'(v) = U \cup \{c\}$, where $U=\pa(v)\cup \pa(c)\setminus\{v\}$.
\end{definition}
It was claimed that the operation yields a graphical model with the same joint distribution as the original. Now we prove it.
\begin{proof}
	It is enough to show that (1) the graphical model after reversal is still valid; (2) the joint distribution does not change, which requires
	\begin{align}
		p(x_v \given  \x_{\pa(v)}) p(x_c \given  x_v, \x_{\pa(c)\setminus \{v\}})=p(x_c \given  \x_U) p(x_v\given x_c, \x_U).\notag
	\end{align}
	For (1), we need to show that no cycles could be formed during the process. For any $p_v\in\pa(v)$, an edge $p_v\to c$ is added. Because there does not exist a path from $c$ to $p_v$ (otherwise there will be a loop in the original model), this edge will not cause a loop. For any $p_c\in\pa(c)\setminus \{v\}$, an edge $p_c\to v$ is introduced. This edge will also not cause a loop; otherwise another path from $v$ to $c$ will be found. Finally, the edge $v\to c$ is replaced with $c\to v$, this edge will also not introduce a loop because there are no other paths from $v$ to $c$. \dan{Maybe rename $s \to p_v$ and $t \to p_c$ to match the figure in the main text?}
	
	Now we show (2). Since there is no other paths from $v$ to $c$, there is no path from $v$ to any nodes in $\x_{\pa(c)\setminus \{v\}}$. Conditioned on $\x_{\pa(v)}$, by conditional independence, we have that $p(x_v \given  \x_{\pa(v)})=p(x_v \given  \x_{U})$. Also, all paths from nodes in $\pa(v)$ to $c$ are blocked either by $v$, or by a parent of $c$, so conditioned on $\x_{\pa(c)\setminus \{v\}}$, by independence, $p(x_c \given  x_v, \x_{\pa(c)\setminus \{v\}})=p(x_c \given  x_v, \x_{U})$. By the properties of conjugacy, we have that
	\begin{align}
		p(x_v \given  \x_{\pa(v)}) p(x_c \given  x_v, \x_{\pa(c)\setminus \{v\}})&=p(x_v \given  \x_{U})p(x_c \given  x_v, \x_{U})\notag\\
		&=p(x_c \given  \x_U) p(x_v\given x_c, \x_U).\notag
	\end{align}
        \dan{Where is conditional conjugacy used? I wasn't expecting it. The last equality? Isn't that immediate from the chain rule, i.e. $p(x_v | \x_U)p(x_c | x_v, \x_U) = p(x_v, x_c | \x_U) = p(x_c | \x_U) p(x_v | x_c, \x_U)$}
\end{proof}

\section{Proof of the Theorem 1}
\label{proof:th1}
We first restate Theorem \ref{thm:theorem1}. \dan{This was slightly edited in main text, should update}
\addtocounter{theorem}{-1}
\begin{theorem}
Let $G$ be a graphical model where node $v$ has children $c_1,\dots,c_H$.
If $x_v$ is locally conjugate to each of $x_{c_1},\dots,x_{c_H}$, then node $v$ can be turned into a leaf by sorting $c_1, \ldots, c_H$ by any topological ordering and reversing the edges from $v$ to each child following this ordering.
\end{theorem}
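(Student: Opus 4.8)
The plan is to prove the theorem by induction on $H$, the number of children of $v$, using the edge-reversal operation (Definition~\ref{def:edge_reversal}) as the single atomic step and carefully tracking how a topological ordering evolves after each reversal. The base case $H=0$ is immediate: $v$ is already a leaf. For the inductive step, fix a topological ordering of $G$, and let $c$ be the child of $v$ that comes first among $c_1,\dots,c_H$ in this ordering. First I would argue that it is \emph{safe} to reverse $v \to c$: there can be no other directed path from $v$ to $c$, because any such path would have to pass through an intermediate node $w$ with $v$ before $w$ before $c$ in the topological ordering, and $w$ would then be a child-or-descendant of $v$ appearing before $c$; chasing this down to a child of $v$ contradicts the minimality of $c$ among $v$'s children. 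Hence the no-other-path precondition of Definition~\ref{def:edge_reversal} and local conjugacy together license the reversal, and it preserves the joint distribution (by the result in Appendix~\ref{proof:same_joint}) and acyclicity.

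The crux of the argument is the bookkeeping on the topological ordering. After reversing $v \to c$, the new parent sets are $\pa'(c) = U$ and $\pa'(v) = U \cup \{c\}$ with $U = \pa(v) \cup \pa(c) \setminus \{v\}$. I would exhibit an explicit valid topological ordering of the new graph $G'$: take the old ordering and move $v$ to the position immediately after $c$, leaving all other nodes in their original relative order. I need to check this is consistent with $G'$: (i) every node in $U$ precedes both $c$ and $v$ — true since $U \subseteq \pa(v) \cup \pa(c)$ and in the old ordering all of those precede $v$, hence precede $c$'s new position which is before $v$'s; (ii) $c$ precedes $v$ — true by construction; (iii) no previously-valid constraint is broken — the only node that moved is $v$, and $v$ moved \emph{later}, so any edge into $v$ is still respected, while any edge out of $v$ goes to one of the remaining children $c_2,\dots,c_H$ (in the relabeling where $c = c_1$), all of which came after $c$ in the old ordering and therefore still come after $v$'s new position. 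In particular, $v$ still precedes each of its remaining children $c_2,\dots,c_H$, and their relative order is unchanged, so the child of $v$ that is now minimal in the topological ordering is exactly $c_2$ — i.e., "following this ordering" in the theorem statement is well-defined and consistent.

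It then remains to check that the inductive hypothesis applies to $G'$ with $v$ now having $H-1$ children. For this I need that $x_v$ is still locally conjugate to each remaining child $c_2,\dots,c_H$ in $G'$. This requires inspecting Definition~\ref{def:edge_reversal}: reversing $v \to c$ changes $\pa(v)$ by adding $c$ and the parents of $c$, but it does not change the distribution family $D_v$ of $v$ nor the parameter map $f_{c_j}$ for any other child $c_j$ with $j \geq 2$ (the factors $p(x_{c_j} \given x_v, \cdots)$ are untouched). Local conjugacy of $x_v$ to $x_{c_j}$ is a property of the pair of families $D_v, D_{c_j}$ together with the functional form of $f_{c_j}$ in its argument $x_v$ (the conditions in Table~\ref{table:conjugacy_pattern}, e.g.\ $\AFFINE(\mu_{c_j}, x_v)$), none of which is disturbed. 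Hence $G'$ satisfies the hypotheses of the theorem for a node with $H-1$ children, and by induction $v$ can be turned into a leaf of $G'$ by reversing $v \to c_2, \dots, v \to c_H$ in order; composing with the first reversal gives the claim for $G$.

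I expect the main obstacle to be the second paragraph — producing a clean, fully rigorous argument that the displaced topological ordering remains valid and that "minimal child in topological order" is stable under the reversal. This is where one must be careful that reversal introduces new edges ($p_c \to v$ for $p_c \in \pa(c)$), and one has to confirm none of these violates the proposed ordering; the observation that $v$ only ever moves to a \emph{later} position makes every incoming edge automatically fine, which is the key simplification to state explicitly. A secondary but more subtle point worth a sentence is why no \emph{other} child $c_j$ ($j \geq 2$) acquires a path from $v$ that did not exist before in a way that would block a future reversal — but since future reversals are applied to children in the (updated) topological order, the same minimality argument from the first paragraph re-applies at each stage, so this is handled uniformly by the induction rather than needing a separate global argument.
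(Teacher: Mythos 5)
Your proposal is correct and follows essentially the same route as the paper's proof: an induction over the sequence of reversals whose invariants are that the remaining children stay topologically ordered, that the minimal remaining child admits no other $v$-to-child path (making the reversal safe), and that local conjugacy to the untouched children is preserved because neither $D_v$ nor the parameter maps $f_{c_j}$ ($j\ge 2$) change. Your explicit construction of the updated topological ordering (moving $v$ to just after $c$) makes the bookkeeping slightly more concrete than the paper's version, but the argument is the same.
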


\begin{proof}
	Without loss of generality, assume $c_1,\ldots,c_H$ are sorted according to topological ordering. We prove by induction. Assume  for $k\in\{0,\ldots,H\}$, we have reversed the edges $v\to c_1,\ldots,v\to c_k$, and have the following properties:
	\javier{It would be clearer if these conditions are named and referred by named later in the proof. (It is going to be a little more than just adding an index like (i).)}
	\begin{enumerate}[label={(\arabic*)}]
		\item The children of $v$ are $c_{k+1},\ldots,c_H$.
		\item $c_{k+1}, \ldots, c_H$ are ordered topologically; \dan{``does not change''is not a property, it's a claim that some property is invariant. I think the property should be that $c_{k+1}, \ldots, c_H$ are ordered topologically.}
		\item $x_v$ is a local conjugate prior for each of $x_{c_{k+1}},\ldots,x_{c_{H}}$.
	\end{enumerate}
	We show that the edge $v\to c_{k+1}$ is reversible and the properties still hold for $k+1$ after the reversal. By (1) and (2), $c_{k+1}$ is minimal among the children of $v$ in topological order, so there does not exist a path from $v$ to $c_{k+1}$ other than $v\to c_{k+1}$.
	By (3), $x_v$ is a local conjugate prior to $x_{c_{k+1}}$. Then we can apply edge reversal to $v\to c_{k+1}$. Now we check all the conditions after the replacement. For property (1), the children of $v$ now become $c_{k+2},\ldots,c_H$. For property (2), the nodes with edges that changed (either incoming or outgoing) were $v$, $c_{k+1}$, and their parents; these nodes all preceded $c_{k+2}, \ldots, c_{H}$ in topological order prior to the reversal and continue to do so afterward, so the relative ordering of $c_{k+2}, \ldots, c_H$ does not change. \dan{I edited last sentence}
	\javier{This is using the base case that is in the body of the text: ``Suppose a node $v$ has only one child $c$ \dots''
	Now, that base case should be named and referred from here explicitly. }
	For (3), the distribution family of $x_v$ does not change, and the conditional distribution of each of $x_{c_{k+2}},\ldots x_{c_H}$ does not change. So all conditions of local conjugacy in Table \ref{table:conjugacy_pattern} will not change for them, which means the distribution of $x_v$ is still a local conjugate prior for the distributions of each of $x_{c_{k+2}},\ldots x_{c_H}$.
	
	In summary, the three conditions holds for $k=H$ by induction, which means $v$ can be converted to a leaf following the said procedure.
\end{proof}
\section{Details of conjugacy detection}
\label{sec:detail_conjugacy}
Conjugacy detection requires looking into the computation graph of functions. In this section, we introduce how conjugacy detection is performed on a computation graph. During the tracing of a program, the procedure of computation is compiled into intermediate representations consisting of basic operations. For example, the function
\begin{lstlisting}[language=Python]
def f(x, y):
    p = (x - y) ** 2
    q = (x + y) ** 2
    return p + q
\end{lstlisting}
could be represented as
\begin{lstlisting}[morekeywords={INPUTS,SUB,SQUARE,ADD,OUTPUTS}]
INPUTS: a, b
c = SUB a b
d = SQUARE c
e = ADD a b
f = SQUARE e
g = ADD d f
OUTPUTS: g
\end{lstlisting}
By looking at the intermediate representations, it is possible to reason about the relationship between outputs and inputs. For a function $f(x_{i_1},x_{i_2},\ldots,x_{i_k})$, we defined its computation graph to be a sequence of $N_f$ \emph{primitive operations} in Section \ref{sec:computation_graph}. The sequence $w_1,w_2,\ldots,w_{k+N_f}$ is computed, where: (1) the first $k$ are the inputs to the function, i.e., $w_j=x_{i_j}$ for $j$ = $1$ to $k$, and (2) each subsequent value is computed from the preceding values as
\begin{align}
	w_j=\phi_j(\mathbf w_{\pred(j)}),
\end{align}
where $\phi_j$ is a primitive operation on the set of values $\mathbf w_{\pred(j)}$, where $\pred(j)\subseteq\{1,\ldots,j-1\}$ is the set of predecessors of $j$. 
In Section \ref{sec:conjugacy_detection}, we have reduced conjugacy detection to affinity, linearity and dependency detections. We first introduce the details of dependency detection with the above definition. 

Given a function with a computation graph, we may want to determine whether a variable $w_{j}$ depends on an input $x_a$. We define the result to be $\DEPENDENT(w_j,x_a)$, which could be obtained recursively through the equations shown in Algorithm \ref{alg:dependent}.
\javier{I do not understand the case of line 5 in \ref{alg:dependent}. What is $x_A$? What is $\pred$? Is $j$ an index or more like $x$?}
\begin{algorithm}[t]
   \caption{Determining dependency of a variable on an input. $\x_{1:M}$ is the set of all random variables.}
   \label{alg:dependent}
\begin{algorithmic}[1]
\STATE  \textbf{function} DEPENDENT ($w_j$, $x$)
       \begin{ALC@g}
			\IF {$w_j=x$}
				\STATE \textbf{return} True
			\ENDIF
			\IF {$w_j\in \x_{1:M}$}
				\STATE \textbf{return} False
			\ENDIF
			\FOR {$p\in \pred(j)$}
				\IF{DEPENDENT($w_p$, $x$)}
					\STATE \textbf{return} True
				\ENDIF
			\ENDFOR
			\STATE \textbf{return} False
       \end{ALC@g}
\end{algorithmic}
\end{algorithm}
Note that in this paper, the inputs of functions are always random variables in $\x_{1:M}$. For $\DEPENDENT(w_j,x_a)$, if $w_j$ is a random variable in $\x_{1:M}$, then it must be an input of the function. Then $\DEPENDENT$ would return whether $w_j$ is $x_a$. If $w_j$ is the result of a basic operator, it would enumerate the inputs of that operator. If any of those inputs are dependent on $x_a$, then the result is true. The recursive algorithm could be of exponential complexity in some special cases. We store intermediate results in a dictionary and refer to it before recursion to avoid redundant computation. Then the complexity is linear with respect to the number of variables involved.
\dan{Could you say explicitly that the resulting algorithm is not exponential? Is it linear?}

Affinity and linearity can be included in the same framework. In addition to determining whether $w_j$ is affine to $x_a$, we further return whether the slope and intercept are non-zero. If affinity is detected with zero intercept, the relationship is then linear. We define $\text{AFFINE\_ALL}(w_j,x_a)$ to be a tuple of three bool variables - whether $w_j$ is affine on $x_a$, whether the slope is non-zero and whether the intercept is non-zero. Then $\LINEAR$ and $\AFFINE$ could be obtained from $\text{AFFINE\_ALL}(w_j,x_a)$. Our algorithm of affinity detection is adapted from \citet{AtkinsonSEMI22} with slight modification to setting that has no concrete values. The pseudocodes are in Algorithm \ref{alg:affinity}. The result of $\text{AFFINE\_ALL}(w_j,x_a)$ could be obtained by enumeration of cases of $\phi_j$ and induction in the structure of the computation graph.
\javier{that is, by induction in the structure of the computation graph.}
For example, if we know $w_j=\text{ADD}(w_{p_1},w_{p_2})$, and $r_1,s_1,t_1=\AFFINEALL(w_{p_1},x_a)$ and $r_2,s_2,t_2=\AFFINEALL(w_{p_2},x_a)$, then $w_j$ is affine to $x_a$ if both of $w_{p_1}$ and $w_{p_2}$ are affine to $x_1$, which means $(r_1\textbf{ and }r_2)$. The slope is non-zero if any of the slope of $p_1$ or $p_2$ is non-zero, so the second return value is $(s_1\textbf{ or }s_2)$. The same applies to whether the intercept is non-zero, which is $(t_1\textbf{ or }t_2)$. \javier{I couldn't understand this last part. I was expecting to see the variables $r_1, s_1, t_1$ and $r_2, s_2, t_2$ being used. } 

\begin{algorithm}[t]
   \caption{Determining affinity and linearity of a variable on an input. $\x_{1:M}$ is the set of all random variables.}
   \label{alg:affinity}
\begin{algorithmic}[1]
\STATE  \textbf{function} AFFINE\_ALL ($w_j$, $x$)
       \begin{ALC@g}
			\IF {$w_j=x$}
				\STATE \textbf{return} True, True, False
			\ENDIF
			\IF {$w_j\in \x_{1:M}$}
				\STATE \textbf{return} True, False, True
			\ENDIF
			\IF {$\phi_j\in\{\text{ADD},\text{SUB}\}$ \AND $\pred(j)=\{p_1,p_2\}$}
				\STATE $r_1,s_2,t_1=\AFFINEALL(w_{p_1},x)$
				\STATE $r_2,s_2,t_2=\AFFINEALL(w_{p_2},x)$
				\STATE \textbf{return} $r_1$ \AND $r_2$, $s_1$ \OR $s_2$, $t_1$ \OR $t_2$
			\ENDIF
			\IF {$\phi_j=\text{MUL}$ \AND $\pred(j)=\{p_1,p_2\}$}
				\STATE $r_1,s_2,t_1=\AFFINEALL(w_{p_1},x)$
				\STATE $r_2,s_2,t_2=\AFFINEALL(w_{p_2},x)$
				\IF {\textbf{not} $s_1$}
					\STATE \textbf{return} $r_1$ \AND $r_2$, $t_1$ \AND $s_2$, $t_1$ \AND $t_2$
				\ENDIF
				\IF {\textbf{not} $s_2$}
					\STATE \textbf{return} $r_1$ \AND $r_2$, $s_1$ \AND $t_2$, $t_1$ \AND $t_2$
				\ENDIF
				\STATE \textbf{return} False, False, False
			\ENDIF
			\IF {$\phi_j=\text{DIV}$ \AND $\pred(j)=\{p_1,p_2\}$}
				\STATE $r_1,s_2,t_1=\AFFINEALL(w_{p_1},x)$
				\STATE $r_2,s_2,t_2=\AFFINEALL(w_{p_2},x)$
				\IF {\textbf{not} $s_2$}
					\STATE \textbf{return} $r_1$ \AND $r_2$, $s_1$, $t_1$
				\ENDIF
				\STATE \textbf{return} False, False, False
			\ENDIF
			\FOR {$p\in \pred(j)$}
				\STATE $r,s,t=\AFFINEALL(w_p,x)$
				\IF{\textbf{not} $r$ \OR $s$}
					\STATE \textbf{return} False, False, False
				\ENDIF
			\ENDFOR
			\STATE \textbf{return} True, False, True
       \end{ALC@g}
\STATE
\STATE  \textbf{function} AFFINE ($w_j$, $x$)
       \begin{ALC@g}
			\STATE $r,s,t=\AFFINEALL(w_j,x)$
			\STATE \textbf{return} $r$
       \end{ALC@g}
\STATE
\STATE  \textbf{function} LINEAR ($w_j$, $x$)
       \begin{ALC@g}
			\STATE $r,s,t=\AFFINEALL(w_j,x)$
			\STATE \textbf{return} $r$ \AND \textbf{not} $t$
       \end{ALC@g}

\end{algorithmic}
\end{algorithm}
\section{Details of reversing an edge}
\label{sec:detail_reversing}
We have constructed the parts of conjugacy detection. In this section we discuss the details of reversing an edge in the marginalized MCMC. In Algorithm \ref{algorithm2} a function $\AFFINECOEFF$ is defined to get the coefficients when affinity is detected. The pseudocode of it is in Algorithm \ref{alg:reversing}, which is similar to $\AFFINE$. We emphasize that the computations in Algorithm \ref{alg:reversing} are fully symbolic. Each variable corresponds to a sequence of operations which could be regarded as a computation graph. So each $+$, $-$, $*$ and $/$ is applied as merging two (potentially overlapping) computation graphs. One issue is we need to define whether some variables are zero (lines 17,19,25). So operations of zeros should be specially dealt with. For example, if we find a $0+0$, instead of declaring an operation that adds two zeros, we should instead use the result $0$. 

\begin{algorithm}[t]
   \caption{Getting the coefficients of affine relationship between a variable $w_j$ on an input $x$. $\x_{1:M}$ is the set of all random variables.}
   \label{alg:reversing}
\begin{algorithmic}[1]
\STATE  \textbf{function} \AFFINECOEFF ($w_j$, $x$)
       \begin{ALC@g}
			\IF {$w_j=x$}
				\STATE \textbf{return} $1$, $0$
			\ENDIF
			\IF {$w_j\in \x_{1:M}$}
				\STATE \textbf{return} $0$, $1$
			\ENDIF
			\IF {$\phi_j=\text{ADD}$ \AND $\pred(j)=\{p_1,p_2\}$}
				\STATE $s_1,t_1=\AFFINECOEFF(w_{p_1},x)$
				\STATE $s_2,t_2=\AFFINECOEFF(w_{p_2},x)$
				\STATE \textbf{return} $s_1 + s_2$, $t_1 + t_2$
			\ENDIF
			\IF {$\phi_j=\text{SUB}$ \AND $\pred(j)=\{p_1,p_2\}$}
				\STATE $s_1,t_1=\AFFINECOEFF(w_{p_1},x)$
				\STATE $s_2,t_2=\AFFINECOEFF(w_{p_2},x)$
				\STATE \textbf{return} $s_1 - s_2$, $t_1 - t_2$
			\ENDIF
			\IF {$\phi_j=\text{MUL}$ \AND $\pred(j)=\{p_1,p_2\}$}
				\STATE $s_1,t_1=\AFFINECOEFF(w_{p_1},x)$
				\STATE $s_2,t_2=\AFFINECOEFF(w_{p_2},x)$
				\IF {$s_1$ is 0}
					\STATE \textbf{return} $t_1*s_2$, $t_1*t_2$
				\ENDIF
				\IF {$s_2$ is 0}
					\STATE \textbf{return} $s_1*t_2$, $t_1*t_2$
				\ENDIF
				\STATE \textbf{raise} Error
			\ENDIF
			\IF {$\phi_j=\text{DIV}$ \AND $\pred(j)=\{p_1,p_2\}$}
				\STATE $s_1,t_1=\AFFINECOEFF(w_{p_1},x)$
				\STATE $s_2,t_2=\AFFINECOEFF(w_{p_2},x)$
				\IF {$s_2$ is 0}
					\STATE \textbf{return} $s_1/t_2$, $t_1/t_2$
				\ENDIF
				\STATE \textbf{raise} Error
			\ENDIF
			\STATE \textbf{return} $0$, $w_j$
       \end{ALC@g}
   
\end{algorithmic}
\end{algorithm}
Now we are only left with the full version of Algorithm \ref{algorithm2}, which could be found in Algorithm \ref{alg:full_reversing}.

\begin{algorithm}[t]
   \caption{The full version of Algorithm \ref{algorithm2}: reversing an edge}
   \label{alg:full_reversing}
\begin{algorithmic}[1]
\STATE  \textbf{function} REVERSE ($G=(D_i,\pa(i),f_i)_{i=1}^M$,$v$,$c$)
       \begin{ALC@g}
       		\STATE // $\pa(v), \pa(c), f_v, f_c, D_c$ updated in place
                \STATE // all variables represent symbolic expressions\\[2pt]
			\IF {$D_c$ is normal \textbf{and} $D_v$ is normal}
			\STATE Let $\mu_v$ and $\sigma_v^2$ be the two output expresssions $f_v$, and $\mu_c$ and $\sigma^2_c$ be the output expressions of $f_c$.
			\STATE $p,q=\text{AFFINE\_COEFF}(\mu_c,x_v)$
			\STATE $k=\sigma_v^2p/(p^2\sigma_v^2+\sigma_c^2)$
			\STATE $\mu_c'=p\mu_v+q$
			\STATE $\sigma_c'^2 = p^2\sigma_v^2+\sigma_c^2$
			\STATE $\mu_v'=\mu_v+k(x_c-\mu_c')$
			\STATE $\sigma_v'^2 = (1-kp)\sigma_v^2$
			\STATE $f_c=(\mu_c',\sigma_c'^2)$, $f_v=(\mu_v',\sigma_v'^2)$
			\ENDIF
			\IF {$D_v$ is Beta \textbf{and} $D_c\in\{\text{Bernoulli},\text{Binomial}\}$}
			\STATE Let $\alpha_v$ and $\beta_v$ be the two output expressions of $f_v$
			\IF {$D_c=\text{Bernoulli}$}
			\STATE $n_c=1$
			\STATE $p_c=\lambda_c$
			\ELSE 
			\STATE Let $n_c$ and $p_c$ be the two output expressions of $f_c$
			\ENDIF
			\STATE $\alpha_v'=\alpha_v+x_c$
			\STATE $\beta_v'=\beta_v+n_c-x_c$
			\STATE $D_c=\text{BetaBinomial}$
			\STATE $f_c=(n_c,\alpha_v,\beta_v)$, $f_v=(\alpha_v',\beta_v')$
			\ENDIF
			\IF {$D_v$ is Gamma \textbf{and} $D_c\in\{\text{Exponential},\text{Gamma}\}$}
			\STATE Let $\alpha_v$ and $\beta_v$ be the two output expressions of $f_v$
			\IF {$D_c=\text{Exponential}$}
			\STATE $\alpha_c=1$
			\STATE $\beta_c=\lambda_c$
			\ELSE 
			\STATE Let $\alpha_c$ and $\beta_c$ be the two output expressions of $f_c$
			\ENDIF
			\STATE $p,q=\AFFINECOEFF(\beta_c,x_v)$
			\STATE $\alpha_v'=\alpha_v+\alpha_c$
			\STATE $\beta_v'=\beta_v+p*x_c$
			\STATE $D_c=\text{CompoundGamma}$
			\STATE $f_c=(\alpha_c,\alpha_v,\beta_v/p)$, $f_v=(\alpha_v',\beta_v')$
			\ENDIF
			\STATE $\pa(c)=(\pa(c)\setminus \{v\})\cup \pa(v)$	
			\STATE $\pa(v)=\pa(v)\cup \{c\}\cup \pa(c)$
			\STATE \textbf{return} $G$
       \end{ALC@g}
   
\end{algorithmic}
\end{algorithm}
\section{Relations to the hoisting algorithm}
\label{sec:relation_hoisting}
The hoisting algorithm in \citet{AtkinsonSEMI22} is an online algorithm that can be used for automatically running Rao-Blackwellized particle filters (RBPF) \cite{doucett2000rao}. Our Algorithm \ref{algorithm1} is highly related to the hoisting algorithm. Both algorithms can perform conjugacy detection and marginalize all possible random variables. One apparent difference is that Algorithm \ref{algorithm1} is written as loops while the hoisting algorithm uses recursions. However, the main difference between the two algorithms comes from the application. In RBPF, all non-marginalizable random variables are sampled from the model, allowing the representations to be semi-symbolic, where non-marginalizable random variables are replaced with sampled values during the execution of the hoisting algorithm. In HMC, no random variables are directly sampled, and the same computation graph will be executed many times, so marginalization should be performed before running with fully symbolic representations. In the mean time, Theorem \ref{thm:theorem1} allows us to separate the conjugacy detections and the reversals in two different loops, which reduces the running time in large scale models. This improvement is not possible with the hoisting algorithm as an online algorithm, so some unnecessary reversals are performed. Furthermore, from the perspective of implementation, the parent node is fixed inside the loop of $v$ in Algorithm \ref{algorithm1}. So in one iteration, all the calls to $\text{CONJUGATE}$ and $\text{REVERSE}$ are with respect to the same $v$. It is therefore possible to save the intermediate results of these functions to avoid redundant computation on the computation graph. So the time complexity of Algorithm \ref{algorithm1} is $O(M|C|)$, where $|C|$ is the size of the computation graph. With the above considerations, we think that Algorithm \ref{algorithm1} is an important contribution in the area. 
\edited{
\section{Stan results with manual marginalization}
\label{sec:stan_res}
\begin{table*}[t]
\caption{Min ESS across all dimensions, time (s) and min ESS/s for NUTS and NUTS with manual marginalization (NUTS-M) on the repeated binary trials model with Stan. Mean and std over 5 independent runs are reported.}
\label{table:stan_hierarchical_partial_pooling}
\vskip 0.15in
\begin{center}
\begin{small}
\begin{sc}
\begin{tabular}{ccccc}
\toprule
Dataset &Algorithm& Min ESS & Time (s) & Min ESS/s\\
\midrule
\multirow{ 2}{*}{Baseball hits 1970 ($n=18$)}&NUTS&3647.6 (1041.0) & 	23.3 (0.5) & 155.9 (43.4)\\
&NUTS-M&\textbf{29674.5} (3209.4) & 	\textbf{21.1} (0.3) &\textbf{1407.9} (142.1)\\
\midrule
\multirow{ 2}{*}{Rat tumors ($n=71$)}&NUTS&23556.4 (1487.4)&37.9 (1.4)	&623.7 (59.0)\\
&NUTS-M&\textbf{45241.9} (1445.4)&	\textbf{24.8} (0.4)&	\textbf{1826.2} (84.4)\\
\midrule
\multirow{ 2}{*}{Baseball hits 1996 AL ($n=308$)}&NUTS&9684.6 (529.8)&	99.8 (1.9)&	97.0 (5.6)\\
&NUTS-M&\textbf{42718.0} (1207.6)&	\textbf{43.0} (0.6)&	\textbf{994.1} (27.0)
\\
\bottomrule
\end{tabular}
\end{sc}
\end{small}
\end{center}
\end{table*}
Although automatic marginalization has not been implemented in Stan, it is possible to confirm its applicability by testing on Stan with manual marginalization. Specifically, it is possible to rewrite the model as the marginalized one and use the \texttt{generated quantities} section to recover the marginalized variables. We replicated the experiments in Table \ref{table:hierarchical_partial_pooling} and the results is in Table \ref{table:stan_hierarchical_partial_pooling}. By introducing marginalization, NUTS in Stan not only runs faster, but also gets better samples for the repeated binary trials model.
}
\section{Slow compilation of JAX}
\label{sec:slow_compilation}
In the experiments, we discover that the compilation time of JAX can be slow for some models. We identify the problem specifically at the structure of marginalized hierarchical models. To demonstrate, we consider the simple model
\begin{align}
	x\sim\mathcal{N}(0,1),\ \log\sigma\sim\mathcal{N}(0,1),\ y_i\sim\mathcal{N}(x,\sigma),\notag
\end{align}
where $i=1,\ldots,N$ and $y_i=0$ for all $i$ are provided as pseudo observations. It is possible to marginalize $x$ by reversing edges to each of $y_i$. However, we found that the JIT compilation time scales super-linear with respect to $N$ for the marginalized model. See Figure \ref{fig:compilation}.
\begin{figure}[t]
\centering
\includegraphics[width=.45\textwidth]{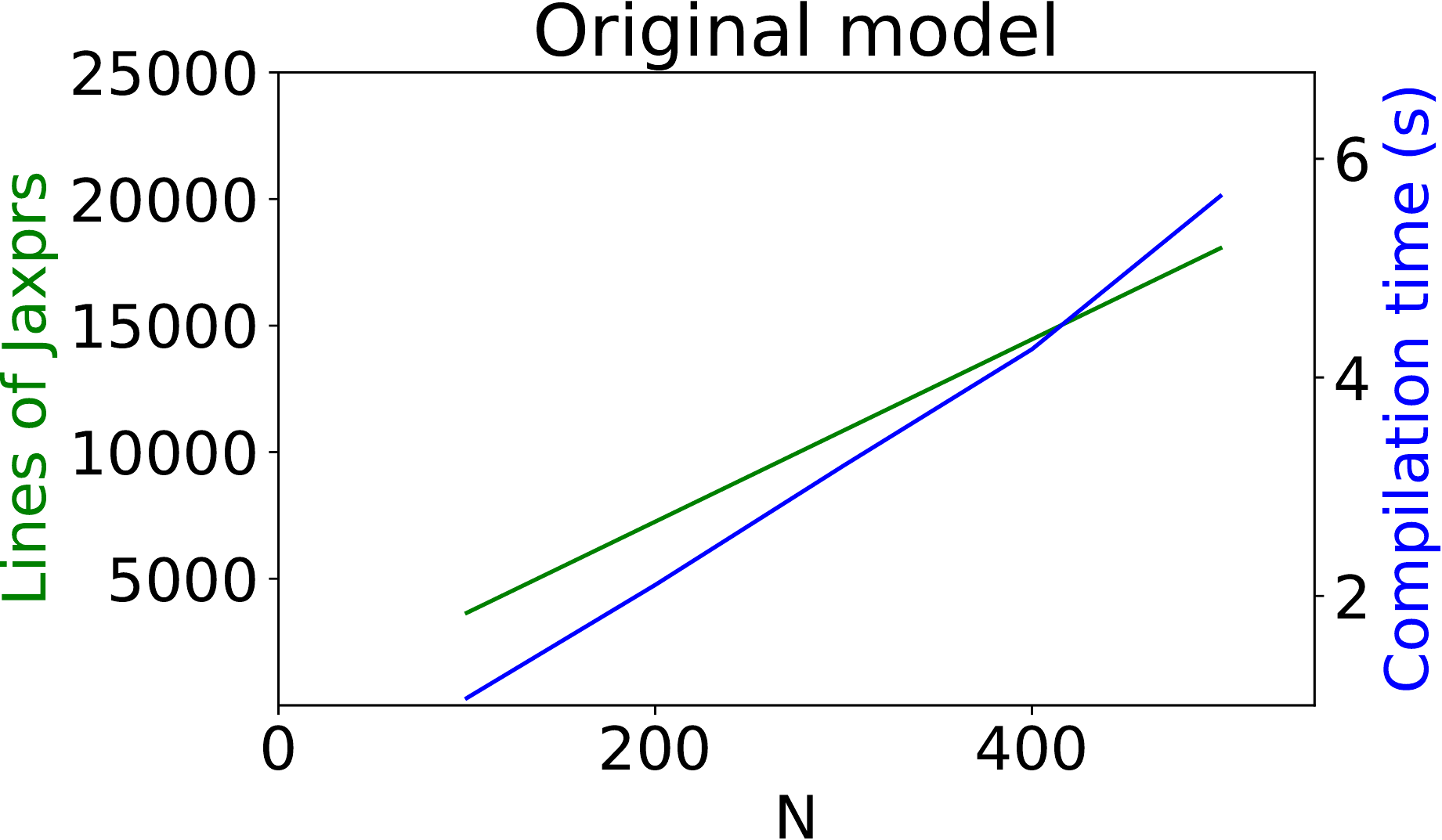}\hfill{}
\includegraphics[width=.45\textwidth]{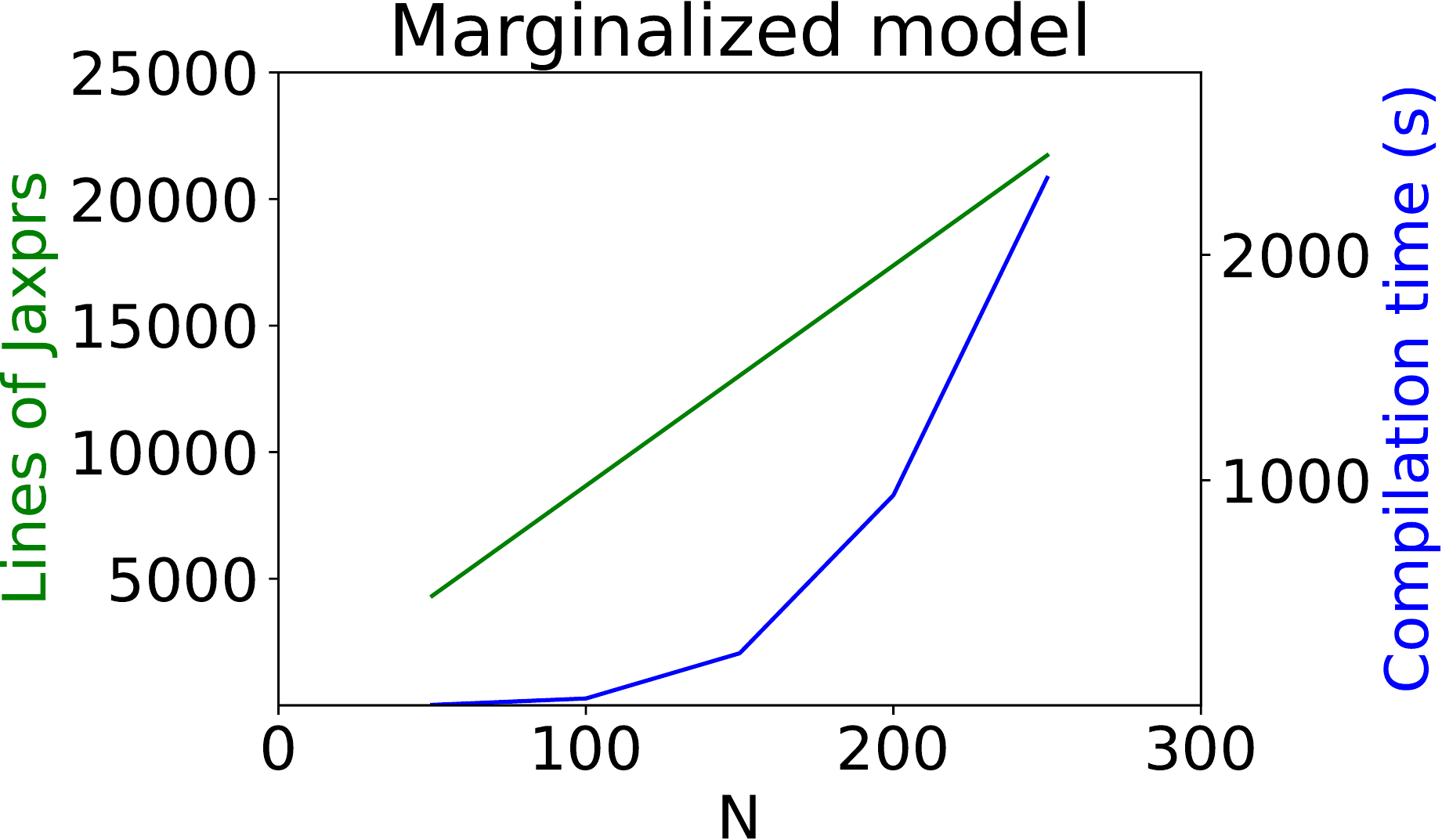}	
\caption{Lines of Jaxprs of the gradient of the log density function and JIT compilation time with respect to $N$ for the simple hierarchical model. With similar lines of Jaxprs, the compilation time can be hundreds of times slower on the marginalized model than on the original model.}
\label{fig:compilation}
\end{figure}
Regardless of the performance, the JIT compilation time for the gradient function of the marginalized model can be hundreds larger than that of the original model when $N$ is large enough, with similar lines of Jaxprs. This is probably because marginalization creates a chain shaped computation graph for all the observations, and it is difficult for JAX to work in this case. We do not regard it as a core limitation of our idea.

\end{document}